\documentclass{article} % For LaTeX2e
\usepackage[preprint]{colm2026_conference}

\usepackage{microtype}
\usepackage{hyperref}
\usepackage{url}
\usepackage{booktabs}
\usepackage{graphicx}
\usepackage{subcaption}
\usepackage{multirow}
\usepackage{xcolor}
\usepackage{colortbl}
\usepackage{amsthm}
\usepackage{amsmath}
\usepackage{amssymb}
\usepackage{mathtools}
\usepackage{bm}
\usepackage{enumitem}
\usepackage{float}
\usepackage[ruled,vlined,linesnumbered]{algorithm2e}
\usepackage{tikz}
\usetikzlibrary{arrows.meta,positioning,calc,fit,backgrounds,shapes.geometric}
\usepackage{cleveref}

\newtheorem{proposition}{Proposition}

\newcommand{\R}{\mathbb{R}}
\newcommand{\E}{\mathbb{E}}
\newcommand{\bX}{\mathbf{X}}
\newcommand{\bQ}{\mathbf{Q}}
\newcommand{\bK}{\mathbf{K}}
\newcommand{\bV}{\mathbf{V}}
\newcommand{\bP}{\mathbf{P}}
\newcommand{\bM}{\mathbf{M}}
\newcommand{\bA}{\mathbf{A}}
\newcommand{\bY}{\mathbf{Y}}
\newcommand{\bone}{\mathbf{1}}
\newcommand{\cN}{\mathcal{N}}
\newcommand{\cS}{\mathcal{S}}
\DeclareMathOperator{\softmax}{softmax}
\DeclareMathOperator{\sigmoid}{sigmoid}

\definecolor{swacolor}{HTML}{4A90D9}
\definecolor{stocolor}{HTML}{E74C3C}
\definecolor{fuscolor}{HTML}{2ECC71}
\definecolor{gatecolor}{HTML}{F39C12}
\definecolor{lightbg}{HTML}{F5F5F5}

\usepackage{lineno}

\definecolor{darkblue}{rgb}{0, 0, 0.5}
\hypersetup{colorlinks=true, citecolor=darkblue, linkcolor=darkblue, urlcolor=darkblue}

\title{Stochastic Attention: Connectome-Inspired Randomized Routing for Expressive Linear-Time Attention}

\author{Zehao Jin\\
Tsinghua University\\
\texttt{lunamos.thu@gmail.com} \\
\And
Yanan Sui \\
Tsinghua University \\
\texttt{ysui@tsinghua.edu.cn} \\
}

\begin{document}

\ifcolmsubmission
\linenumbers
\fi

\maketitle

\begin{abstract}

The whole-brain connectome of a fruit fly comprises over 130K neurons connected with a probability of merely 0.02\%, yet achieves an average shortest path of only 4.4 hops. Despite being highly structured at the circuit level, the network's long-range connections are broadly distributed across brain regions, functioning as stochastic shortcuts that enable efficient global communication. Inspired by this observation, we propose Stochastic Attention (SA), a drop-in enhancement for sliding-window attention (SWA) that applies a random permutation to the token sequence before windowed attention and restores the original order afterward. This transforms the fixed local window into a stochastic global one within the same $O(nw)$ per-layer budget. Through depth, independently sampled permutations yield exponentially growing receptive fields, achieving full sequence coverage in $O(\log_w n)$ layers versus $O(n/w)$ for SWA. We validate SA in two settings: pre-training language models from scratch, where a gated SA + SWA combination achieves the best average zero-shot accuracy, and training-free inference on Qwen3-8B and Qwen3-30B-A3B, where SA consistently outperforms SWA and matches or exceeds Mixture of Block Attention at comparable compute budgets. These results suggest that connectome-inspired stochastic routing is a practical primitive for improving the expressivity of efficient attention, complementary to existing linear and sparse approaches.
\end{abstract}

\section{Introduction}

How should an efficient attention mechanism route information? A compelling answer comes from neuroscience. The whole-brain connectome of fruit fly (\textit{Drosophila melanogaster}) \citep{Lin2024a,Dorkenwald2024a} reveals a network of ${\sim}$130{,}000 neurons with a connection probability of merely $0.02\%$, yet an average path length of only ${\sim}$4.4 hops and a small-worldness coefficient of ${\sim}$141. The Drosophila connectome is highly structured, featuring rich-club organization, elevated reciprocity, and selective motif over-representation \citep{Lin2024a}. Yet it also exhibits small-world topology: dense local clustering coexists with broadly distributed long-range connections. From any local neighborhood's perspective, the targets of these long-range projections resemble stochastic shortcuts scattered across brain regions. This suggests a design principle: global information flow can emerge from the interplay of structured local computation and distributed long-range shortcuts accumulated over a few synaptic steps.

This principle contrasts sharply with sliding-window attention (SWA) \citep{Beltagy2020,Jiang2023,Liu2021}, which restricts each token to a local window of size $w$ at $O(nw)$ cost per layer. SWA has been widely adopted in production models: Mistral \citep{Jiang2023} uses it throughout, while Gemma~2 \citep{Gemma2024} and gpt-oss \citep{OpenAI2025} alternate SWA with full attention. However, SWA’s deterministic locality limits the receptive field to $\ell w$ after $\ell$ layers, leaving large portions of the sequence unreachable when $w \ll n$. Existing remedies introduce global tokens \citep{Beltagy2020}, hand-crafted sparse patterns \citep{Zaheer2021}, or block-level routing \citep{Lu2025}, each adding architectural complexity.

Inspired by this organization, we propose \textbf{Stochastic Attention (SA)}: before applying windowed attention, we randomly permute the token sequence, and after attention, we restore the original order.
In the permuted space, the fixed local window spans a random subset of the full sequence, giving each token a uniform probability of attending to any other regardless of distance.
Through depth, independently sampled permutations yield exponentially growing receptive fields.
When combined with SWA via a learned gate, SA + SWA reproduces the connectome’s small-world regime: structured local clustering from SWA and distributed long-range shortcuts from SA.
The mechanism adds no learnable parameters to the attention itself and only $O(n)$ index-permutation overhead, implemented as simple permutation operations around any existing SWA kernel.

\begin{figure}[htbp]
  \begin{center}
    \centerline{\includegraphics[width=\columnwidth]{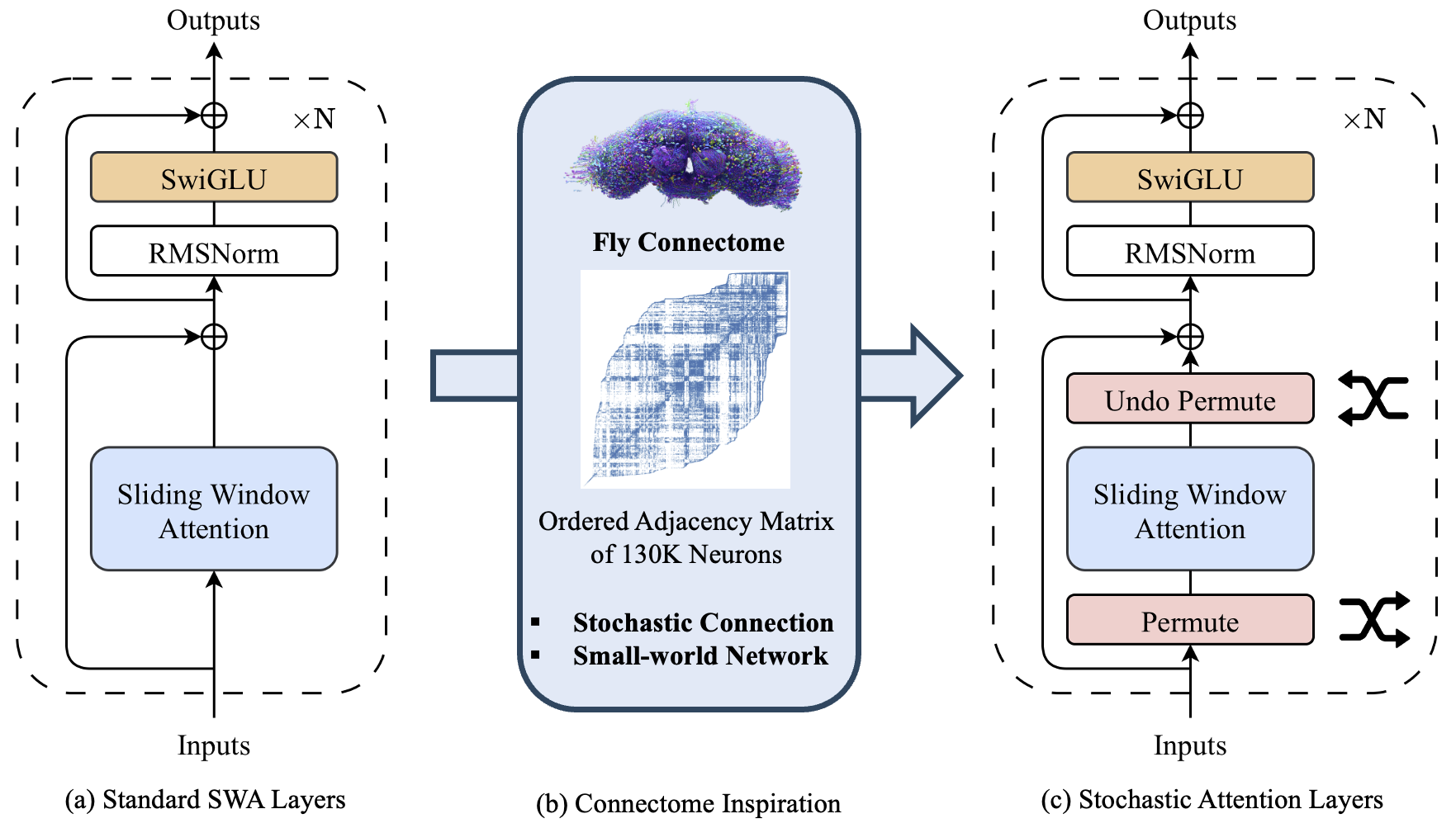}}
    \caption{
      Overview of Stochastic Attention (SA). (a)~A standard SWA Transformer layer. (b)~The fruit fly whole-brain connectome: the adjacency matrix, shown after Reverse Cuthill--McKee reordering to expose block structure, lacks clear diagonal blocks, indicating that connections are broadly distributed across brain regions rather than confined to local modules. (c)~An SA layer: token sequences are randomly permuted before windowed attention and restored afterward, producing stochastic long-range shortcuts analogous to the cross-regional connections in (b).
    }
    \label{fig:arch}
  \end{center}
\end{figure}

We evaluate SA in two complementary settings.
First, we pre-train language models from scratch, comparing SA, SWA, and their gated combination under identical architectures and training recipes.
The combined SA + SWA model achieves the best average zero-shot accuracy, demonstrating that the two mechanisms are complementary: SWA provides local coherence while SA provides global coverage.
Second, we apply SA as a training-free attention replacement in Qwen3-8B and Qwen3-30B-A3B \citep{Yang2025b}, where it consistently outperforms SWA and matches or exceeds MoBA \citep{Lu2025} at comparable compute budgets, demonstrating that stochastic routing is effective even when applied post-hoc to models trained with full attention.

\paragraph{Contributions.}
(1) We introduce \textbf{Stochastic Attention (SA)}, a parameter-free enhancement for SWA that randomly permutes token order before windowed attention, achieving exponential receptive field growth ($O(\log_w n)$ full coverage) within the same $O(nw)$ budget.
(2) We propose a \textbf{gated SA + SWA} combination that reproduces the connectome's small-world regime (local clustering from SWA, stochastic long-range shortcuts from SA) and provides theoretical analysis of coverage depth, pairwise connectivity, and bias-variance trade-offs.
(3) Experiments on pre-training (360M) and training-free inference (Qwen3-8B, Qwen3-30B-A3B) show SA + SWA consistently outperforms SWA and matches or exceeds full attention and MoBA at comparable compute.

\section{Related Work}

\paragraph{Windowed, sparse, and linear attention.}
Longformer \citep{Beltagy2020} augments local windows with global tokens. Swin Transformer \citep{Liu2021} uses shifted windows for cross-window interaction in vision. BigBird \citep{Zaheer2021} combines local, random, and global connections with expressivity guarantees. MoBA \citep{Lu2025} routes each query to the top-$k$ most relevant KV blocks.
Linear attention replaces softmax with kernelized or recurrent formulations \citep{Katharopoulos2020}, and Gated Linear Attention \citep{Yang2024} adds data-dependent gating for improved expressivity. Further advances include \citet{Yang2025,Dao2024,Oren2024,Guo2025,Lei2025}.
SA is complementary: it does not alter the attention formulation or define a sparse pattern, but changes which tokens become local neighbors across layers via random permutations, enabling global mixing within any existing windowed or linear attention kernel.

\paragraph{Token shuffling and rearrangement.}
Several vision methods employ deterministic token rearrangement to improve efficiency.
Shuffle Transformer \citep{Huang2021} permutes tokens across spatial windows using a fixed pattern inspired by channel shuffle, enabling cross-window information flow.
Token-Shuffle \citep{Ma2025} merges local visual tokens along the channel dimension (a spatial-to-depth reshape) to reduce token count in autoregressive image generation.
DeepStack \citep{Meng2024} distributes visual tokens across different Transformer layers rather than concatenating them all at the input.
These methods use structured, deterministic rearrangements specific to vision architectures.
SA differs in two key respects: the permutations are random and resampled per layer, which provides provable coverage guarantees ($O(\log_w n)$ depth), and the mechanism is modality-agnostic, applying directly to sequential language modeling.

\section{Method}
\label{sec:method}

We first introduce notation and background (\S\ref{sec:prelim}), then present the biological motivation (\S\ref{sec:connectome}), the SA mechanism (\S\ref{sec:stochastic_attention}), and the gated SA + SWA combination (\S\ref{sec:fusion}).

% -------------------------------------------------------------------------
\subsection{Preliminaries}
\label{sec:prelim}
% -------------------------------------------------------------------------

Consider an input sequence $\bX = (x_1, x_2, \ldots, x_n) \in \R^{n \times d}$, where $n$ is the sequence length and $d$ is the hidden dimension. Standard multi-head attention computes, for each head, the query, key, and value projections $\bQ = \bX W_Q$, $\bK = \bX W_K$, $\bV = \bX W_V \in \R^{n \times d_h}$, where $d_h = d / H$ and $H$ is the number of heads.

For each position $i \in [n]$, sliding window attention (SWA) restricts the attention to a local neighborhood $\cN_w(i)$ of size $w$. For the theoretical analysis, we use a symmetric circular window $\cN_w(i) = \{j \in [n] : |i - j|_n < w/2\}$, where $|i-j|_n = \min(|i-j|, n-|i-j|)$ denotes circular distance.\footnote{In practice, causal language models use a one-sided window $\cN_w(i) = \{j : 0 \leq i - j \leq w-1\}$. The theoretical results hold under either convention.} The SWA output is:
\begin{equation}
\label{eq:swa}
\mathrm{SWA}(i) = \sum_{j \in \cN_w(i)} \alpha_{ij} \, V_j, \quad \alpha_{ij} = \frac{\exp(Q_i^\top K_j / \sqrt{d_h})}{\sum_{k \in \cN_w(i)} \exp(Q_i^\top K_k / \sqrt{d_h})}.
\end{equation}
SWA achieves $O(nw)$ time and memory complexity, but its effective receptive field is limited to a linear growth of $\ell w$ after $\ell$ layers.

% -------------------------------------------------------------------------
\subsection{From Connectome to Stochastic Attention}
\label{sec:connectome}
% -------------------------------------------------------------------------

The fruit fly connectome comprises ${\sim}130{,}000$ neurons with connection probability $p\approx 0.02\%$ and average degree $\bar{k}\approx 21$, yet exhibits a short average path length of ${\sim}4.4$ hops, clustering coefficient ${\sim}0.048$, and small-worldness ${\sim}141$ \citep{Lin2024a}. The network is highly structured, but its short paths require broadly distributed long-range connections that, from any local neighborhood's perspective, function as stochastic shortcuts \citep{Watts1998}. Neither SWA (high clustering, diameter $\Theta(n/w)$) nor a random graph (short paths, negligible clustering) can achieve this regime alone.

We formalize this by modeling attention as a graph on $n$ tokens. In SA, a random permutation $\sigma_\ell\sim\mathrm{Uniform}(\cS_n)$ is drawn independently at each layer, and token $i$ attends to $\sigma_\ell^{-1}(\cN_w(\sigma_\ell(i)))$. The pairwise connection probability is (see Appendix~\ref{sec:appendix_proofs}):
\begin{equation}\label{eq:connection-prob}
  \Pr\bigl[j \in \sigma^{-1}(\cN_w(\sigma(i)))\bigr]
  = \frac{w-1}{n-1}\;\approx\;\frac{w}{n}\,,
\end{equation}

producing approximately uniform edges over all token pairs, analogous to the connectome's distributed long-range shortcuts. The gated SA + SWA combination thus mirrors the Watts--Strogatz construction: SWA preserves local clustering, SA adds distributed shortcuts.

Through multi-layer composition, the reachable set grows as $\E[|R_{\ell}(i)|] = \Omega(w^\ell)$ (Appendix~\ref{sec:proof_expansion}), giving full coverage in $O(\log_w n)$ layers vs.\ $O(n/w)$ for SWA. With $n\approx 130{,}000$ and $\bar{k}\approx 21$, this predicts $\lceil\log_{21}130{,}000\rceil = 4$ layers for all-pairs reachability, matching the connectome's mean path length of ${\sim}4.4$ \citep{Lin2024a}. \Cref{fig:method_analysis} illustrates these properties.

\begin{figure}[htbp]
  \centering
  \includegraphics[width=0.8\columnwidth]{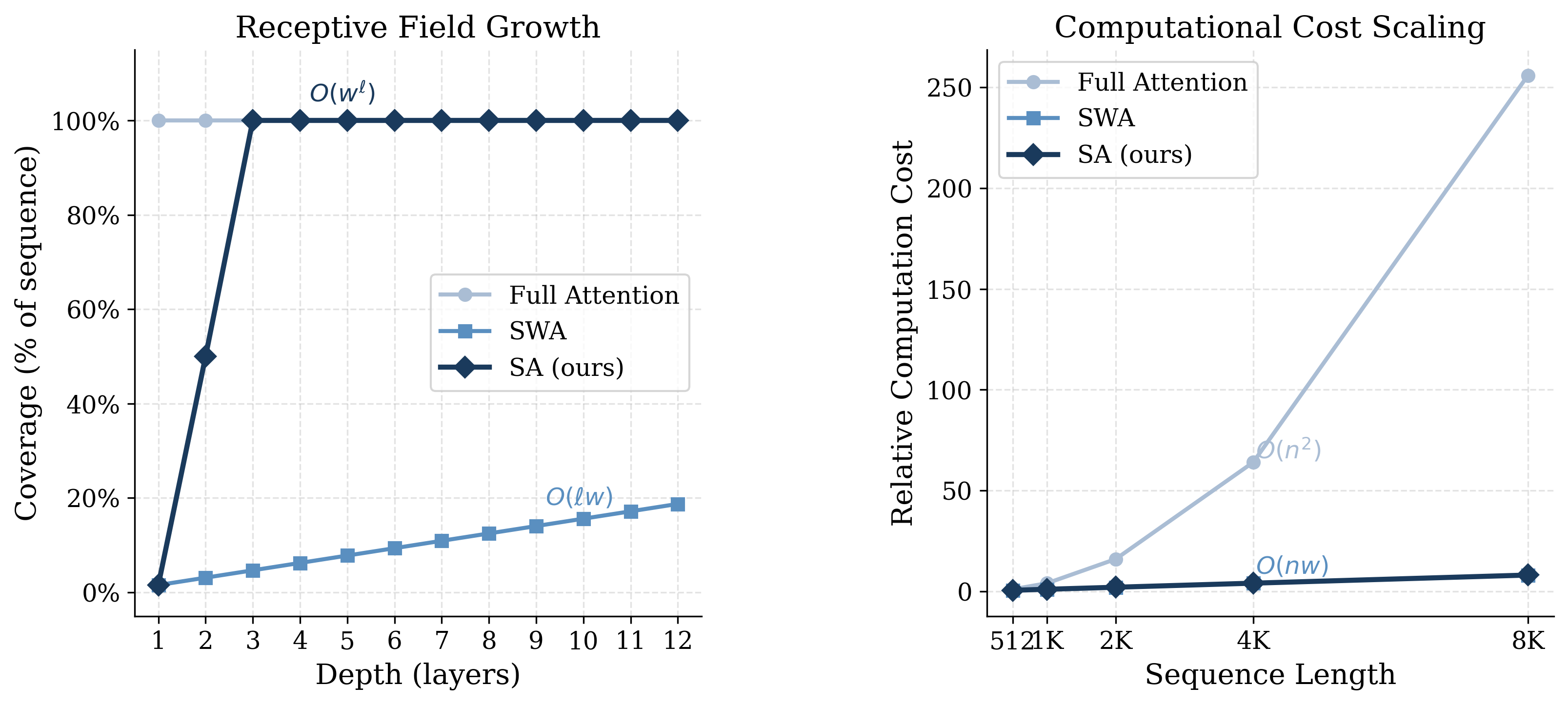}
  \caption{
    Left: Receptive field coverage as a function of depth ($n{=}2048$, $w{=}32$). SA achieves full sequence coverage in $O(\log_w n)$ layers via exponential growth, while SWA requires $O(n/w)$ layers with linear growth. Right: Computational cost scaling with sequence length ($w{=}256$). Both SA and SWA maintain $O(nw)$ linear scaling, while full attention grows quadratically.
  }
  \label{fig:method_analysis}
\end{figure}

% -------------------------------------------------------------------------
\subsection{Stochastic Attention}
\label{sec:stochastic_attention}
% -------------------------------------------------------------------------

The core idea is to apply a random permutation to the token sequence before performing sliding window attention, and then restore the original order afterward. This transforms the positionally local attention pattern into a stochastic global one.

Concretely, let $\sigma \sim \mathrm{Uniform}(\cS_n)$ be a random permutation drawn uniformly from the symmetric group $\cS_n$, and let $\bP_\sigma \in \{0,1\}^{n \times n}$ be the corresponding permutation matrix. Stochastic Attention operates in three stages:

\begin{enumerate}[leftmargin=2em,itemsep=3pt]
    \item \textbf{Permute.} Rearrange all representations: $\tilde{\bQ} = \bP_\sigma \bQ$, $\tilde{\bK} = \bP_\sigma \bK$, $\tilde{\bV} = \bP_\sigma \bV$.
    \item \textbf{Windowed Attention.} Apply standard SWA in permuted space: $\tilde{\bY} = \mathrm{SWA}(\tilde{\bQ}, \tilde{\bK}, \tilde{\bV}; w)$.
    \item \textbf{Undo Permute.} Restore original order: $\bY^{\mathrm{sto}} = \bP_{\sigma^{-1}} \tilde{\bY}$.
\end{enumerate}

In the original token space, position $i$ now attends to the random neighborhood
\begin{equation}
\label{eq:stochastic_neighborhood}
\tilde{\cN}_w^\sigma(i) = \big\{j \in [n] : |\sigma(j) - \sigma(i)|_n < w/2 \big\},
\end{equation}
which is a random subset of $[n]$ of expected size $w$, uniformly spread across the full sequence regardless of the original distance $|i - j|$. Equivalently, the mechanism is characterized by a binary random mask $\bM^\sigma \in \{0,1\}^{n \times n}$ with $M^\sigma_{ij} = \mathbb{1}[|\sigma(i) - \sigma(j)|_n < w/2]$, and the full operation can be written compactly as:
\begin{equation}
\label{eq:compact}
\bY^{\mathrm{sto}} = \softmax\!\Big(\frac{\bQ \bK^\top}{\sqrt{d_h}} \odot \bM^\sigma + (\bone - \bM^\sigma) \cdot (-\infty)\Big) \bV.
\end{equation}

The permutation $\sigma$ is sampled independently per layer and shared across all attention heads within that layer. During inference, $\sigma$ can be either freshly sampled (stochastic mode) or fixed to a predetermined permutation (deterministic mode). We use stochastic mode throughout our experiments.

In autoregressive language models, each token $i$ may only attend to tokens $j \leq i$. Under SA, this causal constraint is applied after permutation: in the permuted space, token $\sigma(i)$ attends to $\{j' \in \cN_w(\sigma(i)) : \sigma^{-1}(j') \leq i\}$. The effective neighborhood in the original space thus consists of tokens that are both within the permuted window and causally accessible. This preserves the autoregressive property while still enabling stochastic long-range connections. The connection probability in Eq.~\ref{eq:connection-prob} becomes approximately $\frac{w-1}{2(n-1)}$ on average (since roughly half of window neighbors are causally masked), which does not change the asymptotic $O(\log_w n)$ coverage depth.

The permute and undo-permute operations are $O(n)$ index rearrangements. The SWA computation in the permuted space costs $O(nw)$, identical to standard SWA. In practice, both steps are implemented via in-place index gather/scatter operations on GPU, which fuse naturally with FlexAttention \citep{dong2024}: the forward permutation is realized as \texttt{Q[sigma], K[sigma], V[sigma]} and the inverse as \texttt{Y[sigma\_inv]}, where both $\sigma$ and $\sigma^{-1}$ are precomputed as integer index tensors. The entire Stochastic Attention layer is thus a thin wrapper around any existing SWA implementation with negligible overhead. Pseudocode is provided in \Cref{alg:stochastic_attn} (Appendix~\ref{sec:appendix_arch}).

% -------------------------------------------------------------------------
\subsection{Combining SA and SWA}
\label{sec:fusion}
% -------------------------------------------------------------------------

As discussed in \S\ref{sec:connectome}, the fruit fly connectome achieves its small-world property through the coexistence of dense local connectivity and sparse long-range shortcuts.
Pure SA provides the shortcuts but disrupts locality: the random permutation scatters positionally adjacent tokens, and when $n \gg w$ the probability that two neighboring tokens share a stochastic window drops to $w/n \ll 1$.
To recover the small-world regime (high clustering and short paths), we combine SA and SWA in a dual-path architecture with learned attention gates:
\begin{equation}
\label{eq:fusion}
\bY = g^{\mathrm{sa}} \odot \bY^{\mathrm{sa}} + g^{\mathrm{swa}} \odot \bY^{\mathrm{swa}},
\end{equation}
where $\bY^{\mathrm{sa}}$ is the output of Stochastic Attention, $\bY^{\mathrm{swa}}$ is the output of standard SWA, and $g^{\mathrm{sa}}, g^{\mathrm{swa}} \in \R^{n \times d}$ are per-token, per-dimension gating weights.

Each gate is computed from its corresponding attention output via an independent sigmoid projection:
\begin{equation}
\label{eq:gate}
g^{\mathrm{swa}} = \sigmoid(W_g^{\mathrm{swa}}\, (\bY^{\mathrm{swa}})^\top)^\top, \quad g^{\mathrm{sa}} = \sigmoid(W_g^{\mathrm{sa}}\, (\bY^{\mathrm{sa}})^\top)^\top,
\end{equation}
where $W_g^{\mathrm{swa}}, W_g^{\mathrm{sa}} \in \R^{d \times d}$ are learnable parameters. Unlike a softmax gate that enforces $g^{\mathrm{sa}}_i + g^{\mathrm{swa}}_i = \bone$, the two sigmoid gates are independent, allowing the model to up-weight or down-weight both paths simultaneously. This design mirrors the single-path attention gate used in the non-fusion variants (see Appendix~\ref{sec:appendix_arch}).

Both attention paths run in parallel. The total cost is $O(nw)$ for SWA $+ O(nw)$ for SA $+ O(nd)$ for the gating projections, giving $O(nw + nd)$ overall. Since both $d$ and $w$ are constants with respect to $n$, the per-layer complexity remains $O(n)$. Pseudocode is provided in \Cref{alg:fusion} (Appendix~\ref{sec:appendix_arch}).

We additionally show that SA is an approximately unbiased estimator of uniform full attention (bias $O(1/w)$, variance $O(B^2/w)$), and that the gated SA + SWA combination admits a bias-variance decomposition where the gate learns to balance SWA's systematic bias against SA's stochastic variance. While a single SA layer has the same spectrum as SWA (permutation is a similarity transform), multi-layer composition with independent permutations breaks this similarity and yields rapid mixing consistent with the $O(\log_w n)$ receptive field bound. Full theoretical analysis, proofs, and a comparison table are provided in Appendix~\ref{sec:appendix_proofs}.

\section{Experiments}

We evaluate Stochastic Attention in two complementary settings.
First, we pre-train language models (${\sim}$360M parameters) from scratch to assess whether SA can close the expressivity gap between SWA and full attention (\S\ref{sec:pretraining}).
Second, we apply SA as a training-free attention replacement in Qwen3-8B and Qwen3-30B-A3B to test whether stochastic routing benefits pretrained models without retraining (\S\ref{sec:qwen}).
We conclude with an efficiency analysis (\S\ref{sec:efficiency}).

\subsection{Pre-training: language modeling}
\label{sec:pretraining}

Following the training recipe of \citep{Yang2024}, we train ${\sim}$360M-parameter decoder-only Transformers on a 6B-token subset of SlimPajama \citep{cerebras2023slimpajama} for 2.5 epochs (${\sim}$15B tokens) with 24 layers, $d{=}1024$, 16 heads, $w{=}256$, and sequence length 2048.
We compare four attention variants: Full Attention, SWA, SA, and SA + SWA.\footnote{All single-path variants (Full, SWA, SA) have 360M parameters. SA+SWA adds one extra gate ($d \times d$ per layer, ${\sim}$25M total, 385M overall). Full training details and hyperparameters are provided in Appendix~\ref{sec:appendix_arch} and Appendix~\ref{sec:appendix_training}.}
All models are evaluated zero-shot on WikiText~\citep{merity2016}, LAMBADA~\citep{paperno2016}, PIQA~\citep{bisk2019}, HellaSwag~\citep{zellers2019}, WinoGrande~\citep{sakaguchi2019}, and ARC-Easy~\citep{clark2018}.

\begin{table}[htbp]
  \centering
  \resizebox{\columnwidth}{!}{%
  \begin{tabular}{l|ccc|cccccc|c}
  \toprule
  \textbf{Model} & \textbf{Wiki.$\downarrow$} & \textbf{LMB-o$\downarrow$} & \textbf{LMB-s$\downarrow$} & \textbf{LMB-o} & \textbf{LMB-s} & \textbf{PIQA} &
  \textbf{Hella.} & \textbf{Wino.} & \textbf{ARC-e} & \textbf{Avg.} \\
  \midrule
  Full Attention         & \textbf{51.34} & 185.3 & 469.2 & 19.5 & 15.4 & \textbf{59.9} & 27.2 & \underline{51.3} & 35.8 & 34.9 \\
  SWA ($w{=}256$)        & 57.05 & \underline{156.1} & \textbf{370.6} & \underline{21.3} & \underline{17.0} & \underline{59.6} & \textbf{27.6} & 48.9 &
  \underline{36.0} & \underline{35.1} \\
  SA ($w{=}256$)         & 75.83 & 260.1 & 785.9 & 20.1 & 14.1 & 59.3 & 26.4 & \textbf{51.9} & 34.2 & 34.3 \\
  SA + SWA ($w{=}256$)   & \underline{51.98} & \textbf{131.7} & \underline{371.6} & \textbf{22.8} & \textbf{17.6} & \underline{59.6} & \underline{27.5} &
  50.5 & \textbf{37.5} & \textbf{35.9} \\
  \bottomrule
  \end{tabular}
  }
\caption{Zero-shot evaluation of language models trained on SlimPajama (15B tokens). All models share identical training setup, differing only in the attention
  mechanism. Wiki.\ and LMB ppl report perplexity ($\downarrow$), all others report accuracy ($\uparrow$). Best in \textbf{bold}, second best \underline{underlined}.}
\label{tab:pretraining_main}
  \end{table}

Table~\ref{tab:pretraining_main} reports zero-shot results.
The gated SA + SWA combination achieves the best average downstream accuracy (35.9) and the best LAMBADA scores (ppl 131.7, acc 22.8/17.6), while matching Full Attention in WikiText perplexity (51.98 vs.\ 51.34).
Pure SA alone suffers substantially higher perplexity than SWA (75.83 vs.\ 57.05), confirming that local coherence from fixed windows is essential for language modeling.
However, SA retains competitive downstream accuracy (avg 34.3 vs.\ SWA's 35.1), suggesting that stochastic global routing captures complementary information.
The SA + SWA fusion recovers the best of both: SWA's local coherence keeps perplexity low, while SA's global coverage lifts downstream tasks, particularly LAMBADA, which requires integrating broad discourse context to predict the final word.

\begin{figure}[htbp]
  \centering
  \includegraphics[width=\columnwidth]{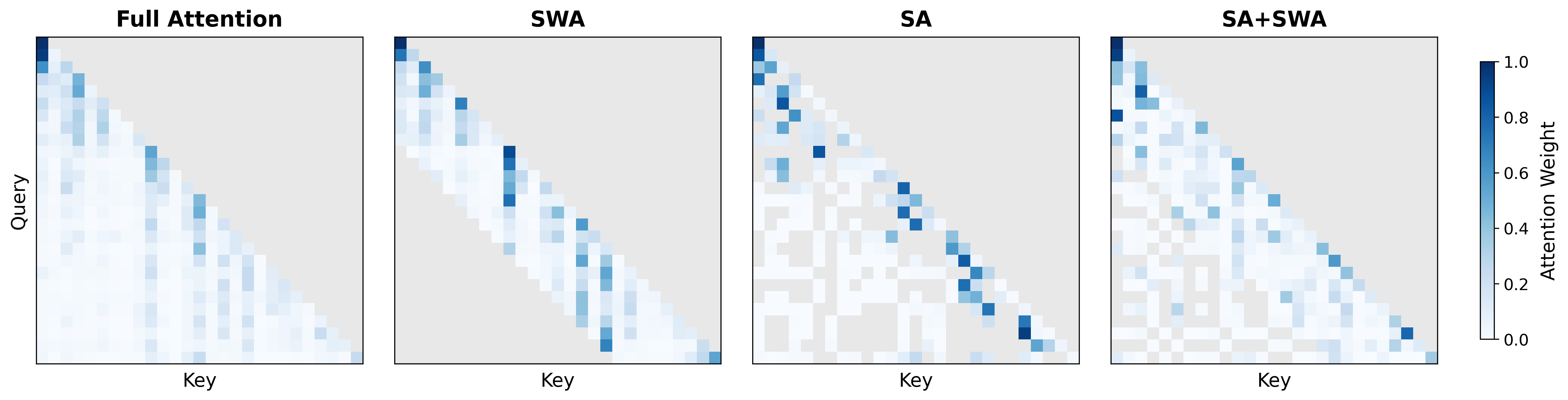}
  \caption{
    Attention weight visualization (Layer 11, Head 0) on a 27-token sequence with window size $w{=}8$. Gray regions are masked (structurally invisible). Blue intensity indicates attention weight. Full Attention exhibits the complete lower-triangular pattern. SWA shows a strict diagonal band with all out-of-window positions masked. Stochastic Attention introduces scattered non-zero entries beyond the diagonal band. These are distant tokens that became local neighbors after random permutation, enabling direct long-range information flow within the same $O(nw)$ budget. SA + SWA combines both patterns: the SWA path provides the dense diagonal band for local coherence, while the SA path adds stochastic long-range connections, with the learned gate adaptively balancing the two.
  }
  \label{fig:attn_patterns}
\end{figure}

To provide further intuition, \Cref{fig:attn_patterns} visualizes the attention patterns of different mechanisms.
SWA produces a strict diagonal band: tokens can only attend within their local window.
Stochastic Attention, by contrast, introduces scattered attention entries far from the diagonal. These correspond to originally distant tokens that became neighbors in the permuted sequence, enabling direct long-range information pathways.
The SA + SWA combination exhibits both the dense diagonal band from SWA and the scattered long-range entries from SA, explaining its strong performance across tasks requiring both local coherence and global reasoning.

\subsection{Training-free inference on Qwen3}
\label{sec:qwen}

To evaluate whether SA can serve as a drop-in replacement for SWA in pretrained LLMs without additional training, we modify the attention mechanism of Qwen3-8B and Qwen3-30B-A3B \citep{Yang2025b} at inference time.
We implement four attention modes sharing the same model weights:
(1)~\textbf{Full}: standard full causal attention (baseline),
(2)~\textbf{SWA}: sliding-window attention with window size $w$,
(3)~\textbf{Stochastic}: SA (random permutation + SWA with the same $w$),
(4)~\textbf{MoBA}: Mixture of Block Attention \citep{Lu2025} with block size $c$ and top-$k$ selection (effective window $\approx c \times k$).
All modes apply only during prefill. Decoding uses full KV-cache attention. For Stochastic mode, RoPE position encodings use the original token positions (not the permuted positions), consistent with the pre-training setup and ensuring compatibility with the model's learned positional representations.
We evaluate on 7 benchmarks using lm-evaluation-harness \citep{eval-harness}: HellaSwag~\citep{zellers2019}, MMLU~\citep{hendrycks2021}, LAMBADA~\citep{paperno2016}, ARC-Easy, ARC-Challenge~\citep{clark2018}, BoolQ (loglikelihood)~\citep{clark2019}, and HumanEval (generation)~\citep{chen2021}.
We sweep the effective window size across $w \in \{16, 32, 64, 128, 256, 512\}$ for SWA and Stochastic. For MoBA, the minimum viable chunk size is 32 (smaller chunks trigger CUDA kernel errors), so we test $c \in \{32, 64, 128, 256\}$ with $k{=}2$ (effective windows 64--512).

Since Qwen3 is trained with full attention, its representations already encode long-range dependencies. SWA at inference time abruptly removes all out-of-window information, creating a distribution shift. SA mitigates this by ensuring that each token can still attend to a random global subset, approximately preserving the full-attention information flow within an $O(nw)$ budget and making it a closer approximation to training-time attention than SWA's strict locality.

\subsubsection{Main results}

\begin{figure*}[htbp]
  \centering
  \includegraphics[width=0.8\textwidth]{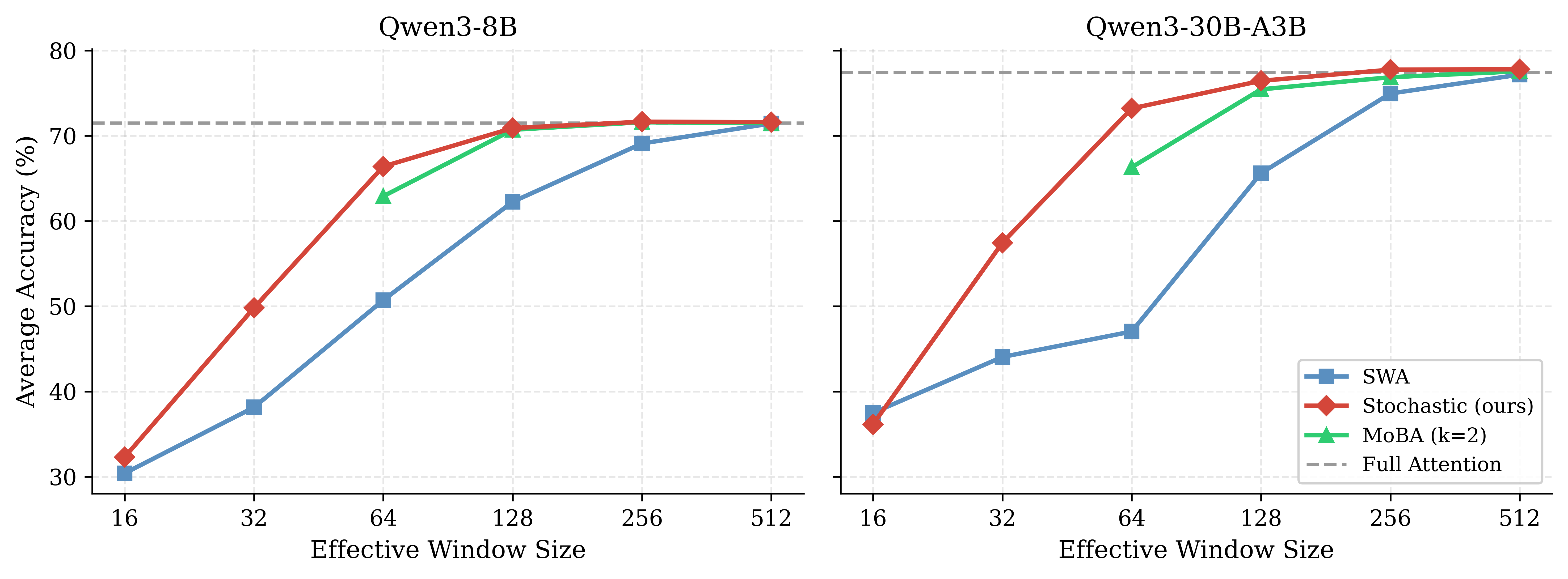}
  \caption{Average accuracy across 7 benchmarks as a function of effective window size for Qwen3-8B (left) and Qwen3-30B-A3B (right). Stochastic Attention (red) recovers the full-attention baseline (dashed gray) most rapidly as window size increases, consistently outpacing SWA (blue) and matching or exceeding MoBA (green) at comparable compute budgets.}
  \label{fig:qwen_scaling}
\end{figure*}

\Cref{fig:qwen_scaling} presents average accuracy as a function of window size for both models. Per-task breakdowns are shown in Figures~\ref{fig:qwen_pertask_8b}--\ref{fig:qwen_pertask_30b}, and full numerical results are provided in Tables~\ref{tab:qwen_main}--\ref{tab:qwen_30b} (Appendix~\ref{sec:appendix_tables}).
Several consistent patterns emerge across model scales.
\textbf{First}, Stochastic recovers full-attention quality fastest: on Qwen3-8B at $w_{\text{eff}} {=} 128$, it already achieves 70.9\% average accuracy (within 1 point of the 71.5\% baseline), while SWA lags at 62.2\%.
The gap is even larger on Qwen3-30B-A3B, where Stochastic reaches 73.2\% at $w_{\text{eff}} {=} 64$ (vs.\ 47.0\% for SWA and 66.3\% for MoBA).
\textbf{Second}, Stochastic consistently outperforms MoBA ($k{=}2$) by 3--7 points at $w_{\text{eff}} {=} 64$ and $128$ across both models, with particularly large gains on MMLU, BoolQ, and LAMBADA.
\textbf{Third}, at very small windows ($w_{\text{eff}} {=} 32$), SWA collapses on knowledge-intensive tasks (MMLU: 29.0 on 8B, 34.9 on 30B), while Stochastic retains substantially higher scores (44.4 / 52.0), confirming effective global information flow even with very local windows.

\begin{figure}[htbp]
  \centering
  \includegraphics[width=\columnwidth]{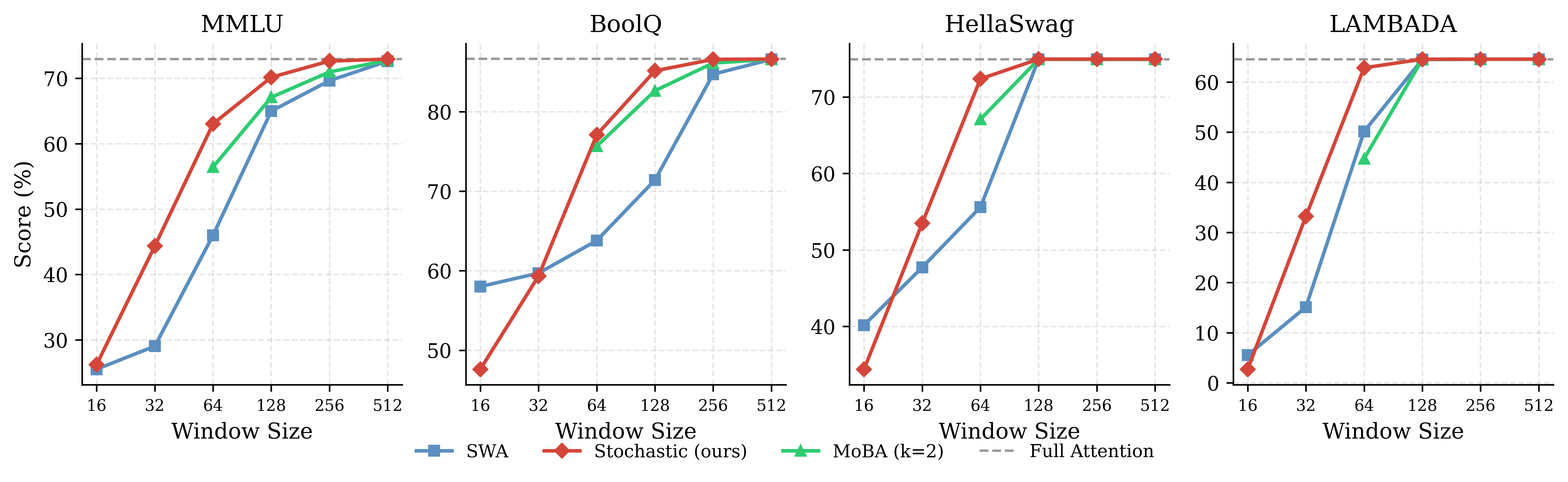}
  \caption{Per-task accuracy vs.\ window size on Qwen3-8B for four representative benchmarks.}
  \label{fig:qwen_pertask_8b}
\end{figure}

\begin{figure}[htbp]
  \centering
  \includegraphics[width=\columnwidth]{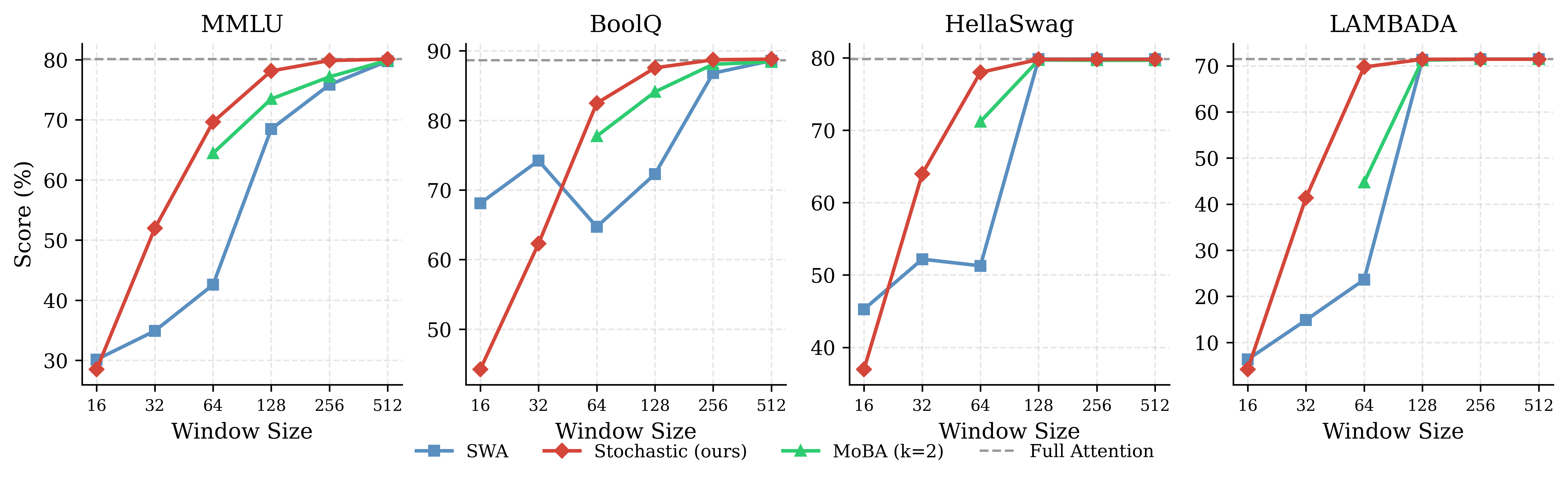}
  \caption{Per-task accuracy vs.\ window size on Qwen3-30B-A3B for four representative benchmarks.}
  \label{fig:qwen_pertask_30b}
\end{figure}

Figures~\ref{fig:qwen_pertask_8b}--\ref{fig:qwen_pertask_30b} show per-task scaling curves on both models. On MMLU and BoolQ, tasks that require aggregating information across contexts, Stochastic converges to the full-attention baseline substantially faster than SWA. The advantage is consistent across both model scales. All Qwen3 results use a single random seed. We did not observe significant variance across preliminary runs with different seeds. Additional per-task results are provided in Appendix~\ref{sec:appendix_pertask}.

\subsection{Efficiency analysis}
\label{sec:efficiency}

We profile attention throughput and memory by isolating the attention computation (forward + backward) at various sequence lengths on a single A100 80GB GPU. Each sequence length is benchmarked in a separate process to avoid compilation interference.

\begin{table}[htbp]
\centering
\begin{tabular}{lccc}
\toprule
\textbf{Seq.\ Length} & \textbf{SA ($w{=}256$)} & \textbf{Full Attn} & \textbf{Speedup} \\
\midrule
2{,}048  & 5.4\,ms   & 8.0\,ms     & 1.5$\times$  \\
4{,}096  & 7.9\,ms   & 27.2\,ms    & 3.5$\times$  \\
8{,}192  & 15.2\,ms  & 99.7\,ms    & 6.6$\times$  \\
16{,}384 & 29.6\,ms  & 379.9\,ms   & 12.8$\times$ \\
32{,}768 & 52.8\,ms  & 1{,}477\,ms & 28.0$\times$ \\
\bottomrule
\end{tabular}
\caption{Attention layer latency (ms, forward+backward) on A100 80GB. SA uses compiled FlexAttention \citep{dong2024} with $w{=}256$. Full attention sets $w{=}L$. Measured with $B{=}16$, $H{=}16$, $d_h{=}64$, bf16.}
\label{tab:efficiency}
\end{table}

Table~\ref{tab:efficiency} reports results from the training sequence length of 2{,}048 onward, where the speedup is stable and meaningful.\footnote{At shorter sequences ($n \leq 1{,}024$), FlexAttention's fixed block-level overhead (128$\times$128 granularity) dominates, making wall-clock comparisons noisy.}
The speedup approximately doubles with each doubling of sequence length (1.5$\times$ at 2K $\to$ 6.6$\times$ at 8K $\to$ 28$\times$ at 32K), consistent with the theoretical $O(nw)$ vs.\ $O(n^2)$ scaling.
For the dual-path SA + SWA configuration, the attention cost is approximately $2\times$ that of single-path SA, but remains $O(nw)$ and retains substantial speedups over full attention at long sequences.

\section{Conclusion}

We have introduced Stochastic Attention (SA), a parameter-free enhancement for sliding-window attention that applies random permutations before windowed attention to transform fixed local windows into stochastic global ones. SA preserves the $O(nw)$ per-layer cost of SWA while achieving exponentially growing receptive fields through depth. When combined with SWA via a lightweight learned gate, the resulting architecture reproduces the small-world regime observed in the fruit fly connectome: dense local clustering from SWA and distributed long-range shortcuts from SA.

Pre-training experiments show the gated SA + SWA combination outperforms both pure SWA and full attention in average downstream accuracy, and training-free inference on Qwen3-8B and Qwen3-30B-A3B demonstrates that SA applied post-hoc to pretrained models matches full-attention quality at a fraction of the compute. Because SWA is already widely deployed in modern foundation models (e.g., Mistral, Gemma~2, gpt-oss), SA can serve as a drop-in upgrade wherever windowed attention layers exist.

More broadly, these results reinforce a lesson from neuroscience: global information flow need not rely on dense all-to-all connectivity, but can emerge from the interplay of structured local computation and sparse long-range shortcuts accumulated through depth.

\section*{Ethics Statement}

This work proposes a general-purpose attention mechanism for Transformer architectures. The method itself does not introduce new ethical risks beyond those inherent to large language models. All experiments use publicly available models (Qwen3) and datasets (SlimPajama, standard NLP benchmarks). No private or sensitive data was used. As with any improvement to language model efficiency or expressivity, downstream applications should be evaluated for potential misuse independently of the architectural contribution.

\section*{Reproducibility Statement}

We provide full architectural and training details in Appendix~\ref{sec:appendix_arch} and Appendix~\ref{sec:appendix_training}, including model dimensions, optimizer hyperparameters, learning rate schedules, batch sizes, and hardware specifications. The SA mechanism requires no additional hyperparameters beyond the window size $w$, which is shared with standard SWA. Pseudocode for both Stochastic Attention and the gated SA + SWA combination is provided in Algorithms~\ref{alg:stochastic_attn}--\ref{alg:fusion}. All proofs and derivations are given in Appendix~\ref{sec:appendix_proofs}. The training-free inference experiments modify only the attention mask of publicly available Qwen3 models and are evaluated using the public lm-evaluation-harness framework \citep{eval-harness}. We will release our implementation upon acceptance.

% \section*{LLM Usage Disclosure}

% In accordance with the COLM policy on the use of large language models, we disclose that LLM-based tools were used during the preparation of this work for grammar checking, proofreading, and refining code implementations (e.g., debugging). All research ideas, experimental design, theoretical analysis, and scientific writing are the original work of the authors. No LLM was used to generate research ideas, produce data or figures, write original content, or perform evaluation.

\newpage
\bibliography{refs}
\bibliographystyle{colm2026_conference}

\newpage
\appendix

\section{Proofs and Derivations}
\label{sec:appendix_proofs}

Throughout this section, we use the circular window convention $\cN_w(i) = \{j : |i-j|_n < w/2\}$ as defined in \S\ref{sec:prelim}.

\subsection{Connection Probability (Eq.~\ref{eq:connection-prob})}
\label{sec:proof_connection}

\begin{proposition}
For a uniform random permutation $\sigma \sim \mathrm{Uniform}(\cS_n)$ and any fixed pair $(i,j)$ with $i \neq j$,
\[
\Pr\bigl[j \in \sigma^{-1}(\cN_w(\sigma(i)))\bigr] = \frac{w-1}{n-1}.
\]
\end{proposition}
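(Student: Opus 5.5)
The plan is to condition on where $\sigma$ sends $i$, and then use the fact that a uniform permutation sends $j$ to a uniformly random \emph{other} position. First I rewrite the event in permuted coordinates. Applying $\sigma$ to both sides, $j \in \sigma^{-1}(\cN_w(\sigma(i)))$ holds if and only if $\sigma(j) \in \cN_w(\sigma(i))$. By the definition of the circular window, this is the same as $|\sigma(j)-\sigma(i)|_n < w/2$, which matches the mask $M^\sigma_{ij}$ of Eq.~\ref{eq:compact}. So the quantity to compute is $\Pr\bigl[\sigma(j) \in \cN_w(\sigma(i))\bigr]$.

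Next I condition on $\sigma(i)=a$ for a fixed $a\in[n]$. This event has probability $1/n$. Since $\sigma$ is uniform on $\cS_n$ and $i\neq j$, the conditional law of $\sigma(j)$ given $\sigma(i)=a$ is uniform on $[n]\setminus\{a\}$. The reason is that for each $b\neq a$, the number of permutations with $\sigma(i)=a$ and $\sigma(j)=b$ is $(n-2)!$, the same for every $b$. Because $a\in\cN_w(a)$ always holds and $\sigma(j)\neq a$, the conditional probability is
\[
\frac{|\cN_w(a)\setminus\{a\}|}{n-1} = \frac{|\cN_w(a)|-1}{n-1}.
\]
By rotational symmetry of circular distance, $|\cN_w(a)|$ does not depend on $a$. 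Averaging over $a$ therefore gives $\Pr = (|\cN_w|-1)/(n-1)$ unconditionally.

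It remains to show $|\cN_w(a)|=w$, and I expect this to be the only delicate step. The set $\cN_w(a)$ consists of $a$ together with the positions at circular distance $1,\dots,m$ on each side, where $m$ is the largest integer strictly less than $w/2$. Each of these positions is distinct provided $2m+1\le n$, so assuming $w\le n$ we get $|\cN_w(a)| = 2m+1$.
\begin{itemize}
\item If $w$ is odd, then $m=(w-1)/2$ and the count is exactly $w$, which yields the claimed value $(w-1)/(n-1)$.
\item If $w$ is even, the strict inequality gives $m=w/2-1$ and the count is $w-1$, which would give $(w-2)/(n-1)$.
\end{itemize}
So in the write-up I would state the convention explicitly: take $w$ odd with $w\le n$, or equivalently read $\cN_w$ as a circular window containing exactly $w$ positions. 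The proof uses nothing beyond $|\cN_w(a)|=w$ and $a\in\cN_w(a)$. The approximation $(w-1)/(n-1)\approx w/n$ in Eq.~\ref{eq:connection-prob} then holds for $1\ll w\ll n$ under either convention.
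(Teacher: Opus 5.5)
Your proof is correct and follows the same route as the paper's: condition on $\sigma(i)=a$, use that $\sigma(j)$ is uniform on $[n]\setminus\{a\}$, and count the slots of $\cN_w(a)\setminus\{a\}$. Your parity check is a valid refinement that the paper's proof skips. The paper simply asserts $|\cN_w(a)|=w$, but under the strict inequality $|i-j|_n<w/2$ this holds only for odd $w\le n$; for even $w$ (such as the $w=256$ used in the experiments) the exact value is $(w-2)/(n-1)$.
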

\begin{proof}
Since $\sigma$ is uniform, $\sigma(i)$ is uniform over $[n]$. Conditioned on $\sigma(i) = a$, the circular window $\cN_w(a)$ contains exactly $w$ positions (including $a$ itself). The image $\sigma(j)$ is uniform over $[n] \setminus \{a\}$ (the remaining $n-1$ positions). Of these, exactly $w-1$ fall in $\cN_w(a) \setminus \{a\}$. Therefore $\Pr[j \in \sigma^{-1}(\cN_w(\sigma(i))) \mid \sigma(i)=a] = (w-1)/(n-1)$ for every $a$, and marginalizing gives the result.
\end{proof}

\subsection{Receptive Field Expansion}
\label{sec:proof_expansion}

\begin{proposition}
Let $R_\ell(i)$ denote the set of tokens reachable from token $i$ through $\ell$ SA layers with independent permutations, and let $r = |R_\ell(i)|$. Then
\[
\E\bigl[|R_{\ell+1}(i)| \;\big|\; |R_\ell(i)| = r\bigr] \geq r + (n-r)\Bigl[1 - \Bigl(1-\frac{w-1}{n-1}\Bigr)^r\Bigr].
\]
When $rw \ll n$, this implies $\E[|R_{\ell+1}(i)|] = \Omega(rw)$, giving $\E[|R_\ell(i)|] = \Omega((w/4)^\ell)$ and full coverage in $O(\log_w n)$ layers.
\end{proposition}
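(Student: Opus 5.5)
Conditioning on $R_\ell(i)=R$ with $|R|=r$, the plan is to use linearity of expectation over the tokens $j\notin R$. Since $R_\ell(i)$ depends only on $\sigma_1,\dots,\sigma_\ell$, it is independent of $\sigma_{\ell+1}$; I take the convention that $R\subseteq R_{\ell+1}(i)$ (each token lies in its own window, so reachability is monotone). A token $j\notin R$ enters $R_{\ell+1}(i)$ iff some $k\in R$ has $j\in\sigma_{\ell+1}^{-1}(\cN_w(\sigma_{\ell+1}(k)))$. Write $E_k$ for the event that $j$ is \emph{not} linked to $k$; by the connection-probability proposition of \S\ref{sec:proof_connection}, $\Pr[E_k]=1-p$ with $p=(w-1)/(n-1)$.

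The key step is to show $\Pr[\bigcap_{k\in R}E_k]\le(1-p)^r$. These events are not independent, so I compute directly. Condition on $a=\sigma_{\ell+1}(j)$. Then $\bigcap_k E_k$ says that all $r$ tokens of $R$ land outside $\cN_w(a)\setminus\{a\}$. Their images form a uniform $r$-subset of the $n-1$ positions other than $a$, so the probability is $\binom{n-w}{r}/\binom{n-1}{r}=\prod_{t=0}^{r-1}\frac{n-w-t}{n-1-t}$. Each factor is at most $\frac{n-w}{n-1}=1-p$, because $\frac{m-t}{N-t}$ is nonincreasing in $t$ when $m\le N$. Sampling without replacement thus gives negative association, which is exactly the direction needed for the lower bound. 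Summing $1-\Pr[\bigcap_k E_k]$ over the $n-r$ tokens outside $R$ and adding $r$ yields the displayed inequality.

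For the asymptotic consequences, let $x=rp\le 1$ (the regime $rw\ll n$). Using $(1-p)^r\le e^{-x}$ and $1-e^{-x}\ge x/2$ on $[0,1]$, the increment is at least $(n-r)\,rp/2$. When $r\le n/2$ and $w\ge 2$, we have $(n-r)(w-1)/(n-1)\ge w/4$. This gives $\E[|R_{\ell+1}|\mid r]\ge r\,w/4$, which is the $\Omega(rw)$ claim.

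To pass to unconditional expectations, I note that $f(r)=r+(n-r)[1-(1-p)^r]$ is concave in $r$ when $rp$ is small. By Jensen, $\E f(|R_\ell|)\le f(\E|R_\ell|)$, which points the wrong way. I see two fixes. One is to use the simpler linear bound $f(r)\ge c\,r w$ on the whole small-$r$ region, which is linear in $r$ and therefore commutes with expectation. The other is to work with a stopped process that caps $r$ at $n/w$. I will take the first: define $g(r)=\min(rw/4,\,n/(2\cdot 4))$-type truncations and verify $f(r)\ge \min(rw/4, c n)$ for all $r$. Induction with $|R_0|=1$ then gives $\E|R_\ell|\ge\min((w/4)^\ell,cn)$, since the min of a linear function and a constant is concave.

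This is the main obstacle: Jensen goes against us for concave truncations, so I would instead prove a high-probability version. By the bounded differences inequality, $|R_{\ell+1}|$ concentrates around its conditional mean once $r$ is moderately large, and a union bound over layers then makes the geometric growth hold with high probability. Once $r\ge cn/w$, the fraction of tokens left uncovered shrinks by a factor $e^{-\Omega(1)}$ per layer. Getting to $O(1/n)$ uncovered in expectation then takes only $O(\log n/\log w)$ additional steps, because after $r=\Theta(n)$ each remaining token is missed with probability $(1-p)^{\Theta(n)}=e^{-\Theta(w)}$. This yields full coverage in $O(\log_w n)$ layers overall.
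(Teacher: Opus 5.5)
Your proof of the displayed inequality follows the paper's route: condition on the slot $a=\sigma_{\ell+1}(j)$ of the target, so that the images of $R$ form a uniform $r$-subset of the remaining $n-1$ slots. It is tighter than the paper's version. The exact ratio $\binom{n-w}{r}/\binom{n-1}{r}=\prod_{t=0}^{r-1}\frac{n-w-t}{n-1-t}\le(1-p)^r$ replaces the paper's unproved appeal to ``negatively correlated sampling.''

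For the asymptotic part you take a genuinely different route. The paper just iterates $\E[|R_{\ell+1}|\mid r]\ge r(1+w/4)$ from $|R_0|=1$ and declares full coverage once $(w/4)^\ell\ge n$. You correctly see that this step is not licensed: the bound is conditional, it holds only while $r\le n/2$ and $rp\le 1$, and $f$ is concave. Your replacement has two stages. First comes a high-probability growth phase. Then comes an endgame: once $|R|\ge n/2$, each uncovered token survives a layer with probability at most $(1-p)^{n/2}\le e^{-(w-1)/2}$. That contraction is linear in the uncovered count, so it needs no Jensen step. Your route costs a concentration inequality. In exchange it actually delivers both the $\Omega((w/4)^\ell)$ growth and $R_\ell(i)=[n]$. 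An expectation bound valid only for $rw\ll n$, which is all the paper has, cannot certify either.

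Three repairs are needed.

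(i) Delete the truncation ``fix.'' Concavity of $\min(rw/4,cn)$ gives $\E g(X)\le g(\E X)$, which is the wrong direction. More strongly, no argument using only conditional means can work. Consider the chain that sits at $1$ and jumps to $n$ with probability $p$ per layer. It satisfies $\E[X_{\ell+1}\mid X_\ell=r]=f(r)$ on its support, yet $\E X_\ell\approx 1+\ell(w-1)$ grows only linearly. So concentration is necessary, not an alternative.

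(ii) State the concentration step for sampling without replacement. Given $R_\ell=R$, $|R_{\ell+1}|$ counts the slots covered by windows centred at a uniform random $r$-subset of $[n]$. Apply Azuma to the Doob martingale that reveals the $r$ centres one at a time; a swap coupling bounds the increments by $w$. This gives deviations $O(w\sqrt{r\log(1/\delta)})$. Since $|R_1|=w$ deterministically and $r$ then grows geometrically, the failure probabilities and relative losses are summable over layers. Hence $\E|R_\ell|\ge\Pr[\mathrm{good}]\cdot(\text{growth bound})$. The same bound is also what lifts $|R|$ from $\Theta(n/w)$ to $n/2$ with high probability before your endgame applies.

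(iii) Fix a constant slip. Weakening $(n-r)p\ge(w-1)/2$ to $\ge w/4$ yields only $r(1+w/8)$. Keep $(w-1)/2$ to get $r(w+3)/4\ge rw/4$. This matters because the statement's base is $w/4$.
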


\begin{proof}
At layer $\ell+1$, a fresh permutation $\sigma_{\ell+1}$ is drawn independently. For any target token $k \notin R_\ell(i)$, the probability that $k$ is not reached by any of the $r$ tokens in $R_\ell(i)$ is:
\[
\Pr[k \notin R_{\ell+1}(i) \mid R_\ell(i)] = \Pr\Bigl[\bigcap_{j \in R_\ell(i)} \{k \notin \sigma_{\ell+1}^{-1}(\cN_w(\sigma_{\ell+1}(j)))\}\Bigr].
\]
All edges share the same permutation $\sigma_{\ell+1}$, so the events are not independent. However, the product bound still holds as an upper bound. To see this, condition on $\sigma_{\ell+1}(k) = s$ for some fixed slot $s$. Given this conditioning, $\sigma_{\ell+1}$ restricted to the remaining $n-1$ tokens is a uniform permutation on $[n] \setminus \{s\}$. For each $j \in R_\ell(i)$, $j$ reaches $k$ iff $\sigma_{\ell+1}(j)$ lands within the window around $s$, i.e., $|\sigma_{\ell+1}(j) - s|_n < w/2$. Since the $\sigma_{\ell+1}(j)$ values for distinct $j$ are drawn without replacement from $[n] \setminus \{s\}$, placing one token near $s$ reduces the number of remaining slots near $s$ for others. This is a negatively correlated sampling scheme, so:
\[
\Pr[k \notin R_{\ell+1}(i) \mid R_\ell(i)] \leq \prod_{j \in R_\ell(i)} \Bigl(1 - \frac{w-1}{n-1}\Bigr) = \Bigl(1 - \frac{w-1}{n-1}\Bigr)^r.
\]
By linearity of expectation over all $n - r$ unreached tokens:
\[
\E[|R_{\ell+1}(i)| \mid |R_\ell(i)|=r] \geq r + (n-r)\Bigl[1 - \Bigl(1-\frac{w-1}{n-1}\Bigr)^r\Bigr].
\]

For the asymptotic bound when $rw \ll n$: let $p = (w-1)/(n-1) \approx w/n$. Using $1-(1-p)^r \geq 1 - e^{-rp} \geq rp(1 - rp/2)$, valid for $rp \leq 1$, and noting $(n-r) \geq n/2$ when $r \leq n/2$:
\[
\E[|R_{\ell+1}(i)|] \geq r + \frac{n}{2} \cdot rp \cdot \frac{1}{2} = r + \frac{rw}{4} = r\Bigl(1 + \frac{w}{4}\Bigr) = \Omega(rw).
\]
Iterating from $|R_0(i)| = 1$ gives $\E[|R_\ell(i)|] = \Omega((w/4)^\ell)$. Full coverage ($R_\ell(i) = [n]$) is achieved when $(w/4)^\ell \geq n$, i.e., $\ell = O(\log n / \log w)$. Note that the base of the exponential is $w/4$ rather than $w$ due to the approximation used. This affects only the constant factor in the coverage depth, not the $O(\log_w n)$ scaling.
\end{proof}

\subsection{Approximation of Full Attention and Variance Bound}
\label{sec:proof_variance}

In the high-temperature limit $\tau \to \infty$, the expected SA output satisfies:
\begin{equation}
\label{eq:unbiased}
\lim_{\tau \to \infty} \E_\sigma\!\left[\mathrm{StoAttn}_\sigma^{(\tau)}(i)\right] = \frac{1}{n} \sum_{j=1}^n V_j + O\!\left(\frac{1}{w}\right),
\end{equation}
making SA an approximately unbiased estimator of uniform full attention (bias $O(1/w)$).

\begin{proposition}
Assuming $\|V_j\| \leq B$ for all $j \in [n]$, the variance of the SA output satisfies:
\[
\E_\sigma\!\Big[\big\|\mathrm{StoAttn}_\sigma(i) - \E_\sigma[\mathrm{StoAttn}_\sigma(i)]\big\|^2\Big] \leq \frac{4B^2}{w}.
\]
\end{proposition}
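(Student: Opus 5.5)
The plan is to reduce the SA output at position $i$ to a weighted sample mean over a random subset of value vectors, and then bound its variance with a sampling-without-replacement argument. By the same reasoning as in the proof of the connection-probability proposition, conditioning on $\sigma(i)$ shows that the stochastic neighborhood $\tilde{\cN}_w^\sigma(i)$ always contains $i$ together with a set $S$ of $w-1$ other tokens. This set $S$ is a uniformly random $(w-1)$-subset of $[n]\setminus\{i\}$, because $\sigma$ restricted to the remaining tokens is a uniform bijection onto the remaining slots. Hence
\[
\mathrm{StoAttn}_\sigma(i) = \alpha_{ii} V_i + \sum_{j \in S} \alpha_{ij} V_j .
\]
The variance is taken only over $\sigma$, i.e.\ over $S$.

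I would first treat the regime the paper has in mind, namely the high-temperature or uniform-weight case underlying Eq.~\eqref{eq:unbiased}, where $\alpha_{ij} = 1/w$. The output is then $\tfrac{1}{w}V_i + \tfrac{1}{w}\sum_{j\in S} V_j$, and the first term is deterministic. The second term is $\tfrac{1}{w}$ times the sum of a simple random sample of size $m=w-1$, drawn without replacement from the population $\{V_j : j \neq i\}$ of size $N=n-1$. Let $\mu = \tfrac{1}{N}\sum_{j\neq i} V_j$ and $s^2 = \tfrac{1}{N}\sum_{j\neq i}\|V_j-\mu\|^2 \le B^2$. The standard finite-population formula, which I would apply coordinatewise and then sum, gives
\[
\E\Big\|\sum_{j\in S} V_j - m\mu\Big\|^2 = m\, s^2\, \frac{N-m}{N-1} \le m B^2 .
\]
Dividing by $w^2$ yields a variance of at most $(w-1)B^2/w^2 \le B^2/w$, which is comfortably within the claimed $4B^2/w$. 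I would verify the finite-population identity by expanding the square and using the pairwise inclusion probabilities $\Pr[j\in S] = m/N$ and $\Pr[j,k\in S] = m(m-1)/(N(N-1))$. As a sanity check, the same negative-correlation structure appears in the receptive-field proof.

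The main obstacle is that for general, data-dependent softmax weights the statement cannot hold without an extra assumption. If $\alpha_{ij}$ concentrates on a single random neighbor, the variance can be of order $B^2$ rather than $B^2/w$. I would therefore make explicit the assumption that the weights are spread, say $\alpha_{ij} \le c/w$; bounded logits $|Q_i^\top K_j|/\sqrt{d_h} \le \log 2$ give exactly $c=2$. The factor $4=c^2$ then explains the constant.

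Under this assumption, the weights depend on $S$ through normalization, so I would bound the variance with the Efron--Stein inequality over the $w-1$ sampled slots, or equivalently with a swap or bounded-differences argument for sampling without replacement. Exchanging one member of $S$ for an outside token changes that token's contribution by at most $2\cdot cB/w$. It also renormalizes the remaining weights, but this costs only $O(cB/w)$ in total because the weights sum to one. Summing the squared changes over the $w-1$ slots gives an order of $(w-1)\,c^2B^2/w^2 \le c^2B^2/w$. Tracking the constants so that this lands at exactly $4B^2/w$ is the delicate step. If the bookkeeping overshoots, I would fall back to stating the uniform-weight bound $B^2/w$ and the general bound with an explicit constant depending on $c$.
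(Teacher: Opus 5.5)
Your uniform-weight argument is correct and is essentially the paper's proof. The paper also sets $\alpha_{ij}=1/w$, treats the output as a mean of values sampled without replacement, and applies the finite-population variance formula.

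Your version is more careful in two ways:
\begin{itemize}
\item You keep the deterministic self-term $V_i/w$. The paper models the whole window as a uniform $w$-subset of $[n]$, which ignores that $i$ always lies in its own window.
\item You bound the population variance by $B^2$ instead of $(2B)^2$, which gives $B^2/w$.
\end{itemize}
The second point is where the paper's factor $4$ comes from: it uses $\|V_j-\bar V\|\le 2B$. Your explanation ``$4=c^2$'' therefore does not match the source of the constant.

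Your observation that the inequality fails for concentrated softmax weights agrees with the paper's own closing remark: $O(B^2/w_{\mathrm{eff}})$, and $O(B^2)$ in the worst case. The paper's proof also only establishes the bound under uniform weights. So the first half of your proposal already covers everything the paper actually proves.

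The optional extension to general weights has three concrete problems if you intend to keep it.
\begin{itemize}
\item \textbf{The constant $c$.} The condition $|Q_i^\top K_j|/\sqrt{d_h}\le\log 2$ only gives $\alpha_{ij}\le e^{2\log 2}/w=4/w$, i.e.\ $c=4$. To get $c=2$ you need the spread of the logits within the window, $\max-\min$, to be at most $\log 2$.
\item \textbf{Efron--Stein does not apply directly.} The $w-1$ sampled slots are draws without replacement, so they are dependent. You would instead need one of two tools:
  \begin{itemize}
  \item the Poincar\'e inequality for the Bernoulli--Laplace swap chain on $m$-subsets of an $N$-set, with your $m=w-1$ and $N=n-1$ (spectral gap $N/(m(N-m))$, tight for linear statistics); or
  \item an encoding of $\sigma$ by i.i.d.\ uniforms.
  \end{itemize}
\item \textbf{The target constant is out of reach.} Written as a reweighting, a single swap changes the output by up to $4cB/w$ in the worst case. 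The Poincar\'e inequality then yields roughly $8c^2B^2/w$. Since $c\ge 1$ is forced (the $w$ weights sum to one), this route cannot reach $4B^2/w$.
\end{itemize}
That part should be stated as an $O(c^2B^2/w)$ bound with its own explicit constant, as your fallback already anticipates.
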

\begin{proof}
Conditioned on $\sigma$, the SA output is $\mathrm{StoAttn}_\sigma(i) = \sum_{j \in \tilde{\cN}_w^\sigma(i)} \alpha_{ij}^\sigma V_j$, which is a weighted average over $|\tilde{\cN}_w^\sigma(i)| = w$ value vectors. Since $\|\sum_j \alpha_j V_j\| \leq B$ for any convex combination when $\|V_j\| \leq B$, we have $\|\mathrm{StoAttn}_\sigma(i)\| \leq B$.

The variance decomposes as:
\begin{align*}
\mathrm{Var}_\sigma[\mathrm{StoAttn}_\sigma(i)] &= \E_\sigma[\|\mathrm{StoAttn}_\sigma(i)\|^2] - \|\E_\sigma[\mathrm{StoAttn}_\sigma(i)]\|^2 \\
&\leq \E_\sigma[\|\mathrm{StoAttn}_\sigma(i)\|^2] \leq B^2.
\end{align*}

For a tighter bound, observe that the randomness enters through the choice of which $w$ tokens appear in the window. The SA output can be viewed as an importance-weighted sample from the full set of $n$ values. Under uniform attention ($\alpha_{ij} = 1/w$), the output is $\frac{1}{w}\sum_{j \in S} V_j$ where $S$ is a random subset of size $w$. This is a sample mean of $w$ draws without replacement from $\{V_1, \ldots, V_n\}$. By standard results on sampling without replacement, the variance is:
\[
\mathrm{Var}\Bigl[\frac{1}{w}\sum_{j\in S} V_j\Bigr] = \frac{1}{w} \cdot \frac{n-w}{n-1} \cdot \sigma_V^2 \leq \frac{\sigma_V^2}{w},
\]
where $\sigma_V^2 = \frac{1}{n}\sum_{j=1}^n \|V_j - \bar{V}\|^2 \leq 4B^2$. The bound above holds under uniform attention. For finite temperature with data-dependent softmax weights, the effective number of attended tokens may be smaller than $w$ (due to concentration of attention mass), and the variance bound becomes $O(B^2 / w_{\mathrm{eff}})$ where $w_{\mathrm{eff}}$ is the effective window size. In the worst case of fully concentrated attention ($w_{\mathrm{eff}} = 1$), the variance is $O(B^2)$.
\end{proof}

\subsection{Spectral Mixing: Single Layer vs.\ Multi-Layer}
\label{sec:proof_spectral}

Under uniform attention, the transition matrix for a single SA layer is $\bA^\sigma = \frac{1}{w}\bP_\sigma^\top A_w \bP_\sigma$, where $A_w$ is the adjacency matrix of the circulant $C_{n,w}$. Since $\bP_\sigma$ is an orthogonal (permutation) matrix, $\bA^\sigma$ is similar to $A_w/w$ and has identical eigenvalues. In particular, $|\lambda_2(\bA^\sigma)| = |\lambda_2(A_w/w)|$ for every $\sigma$, giving the same single-layer spectral gap as SWA: $O(w^2/n^2)$.

The advantage emerges through multi-layer composition.
For $L$ SWA layers, the composed transition matrix is simply $(A_w/w)^L$, inheriting the slow spectral gap of the circulant. For $L$ SA layers with independent permutations, the composed matrix is $\bA^{(1:L)} = \bA^{\sigma_L} \cdots \bA^{\sigma_1} = \frac{1}{w^L}\bP_{\sigma_L}^\top A_w \bP_{\sigma_L} \cdots \bP_{\sigma_1}^\top A_w \bP_{\sigma_1}$. Crucially, this product is not similar to $(A_w/w)^L$ because the conjugating permutations differ across layers.

The receptive field expansion result (\Cref{sec:proof_expansion}) implies that the reachability graph after $L$ SA layers is an expander with high probability when $L = O(\log_w n)$: starting from any token, $\Omega(w^L)$ tokens are reachable. This corresponds to rapid mixing of the composed random walk, in contrast to the $O(n/w)$ layers required for SWA. The key insight is that independent permutations at each layer prevent the slow eigenmodes of the circulant from persisting across depth. \Cref{tab:spectral} summarizes the comparison.

\begin{table}[htbp]
\centering
\renewcommand{\arraystretch}{1.2}
\begin{tabular}{@{}lccc@{}}
\toprule
\textbf{Mechanism} & \textbf{Cost / Layer} & \textbf{Receptive Field} & \textbf{Coverage Depth} \\
\midrule
Full Attention & $O(n^2)$ & $n$ (layer 1) & $1$ \\
Sliding Window & $O(nw)$ & $\ell w$ & $O(n/w)$ \\
\textbf{SA (ours)} & $O(nw)$ & $\Omega((w/4)^\ell)$ & $\bm{O(\log_w n)}$ \\
\textbf{SA + SWA} & $2 \cdot O(nw)$ & $\Omega((w/4)^\ell)$ & $\bm{O(\log_w n)}$ \\
\bottomrule
\end{tabular}
\caption{Comparison of attention mechanisms. $n$: sequence length, $w$: window size, $\ell$: number of layers.}
\label{tab:spectral}
\end{table}

\subsection{Bias-Variance Decomposition}
\label{sec:proof_bv}

Let $\bY^*_i = \mathrm{FullAttn}(\bQ, \bK, \bV)_i$ denote the full attention output. Conditioning on $\bX$ (so that $g^{\mathrm{sa}}_i$, $g^{\mathrm{swa}}_i$, and $\bY^{\mathrm{swa}}_i$ are all deterministic), the MSE decomposes as:
\begin{equation}
\label{eq:fusion_bv}
\E_\sigma\!\Big[\big\|\bY_i - \bY^*_i\big\|^2\Big] = \underbrace{\big\|g^{\mathrm{sa}}_i \odot b^{\mathrm{sa}}_i + g^{\mathrm{swa}}_i \odot b^{\mathrm{swa}}_i\big\|^2}_{\text{bias}^2} + \underbrace{\big\|g^{\mathrm{sa}}_i\big\|^2 \cdot v^{\mathrm{sa}}_i}_{\text{variance}}\,.
\end{equation}
\begin{proof}
Write:
\begin{align*}
\bY_i - \bY^*_i &= g^{\mathrm{sa}}_i \odot (\bY^{\mathrm{sa}}_i - \bY^*_i) + g^{\mathrm{swa}}_i \odot (\bY^{\mathrm{swa}}_i - \bY^*_i) \\
&= g^{\mathrm{sa}}_i \odot \bigl[(\bY^{\mathrm{sa}}_i - \E[\bY^{\mathrm{sa}}_i]) + b^{\mathrm{sa}}_i\bigr] + g^{\mathrm{swa}}_i \odot b^{\mathrm{swa}}_i,
\end{align*}
where $b^{\mathrm{sa}}_i = \E_\sigma[\bY^{\mathrm{sa}}_i] - \bY^*_i$ and $b^{\mathrm{swa}}_i = \bY^{\mathrm{swa}}_i - \bY^*_i$. Taking $\E_\sigma[\|\cdot\|^2]$ and using the fact that the zero-mean term $\bY^{\mathrm{sa}}_i - \E[\bY^{\mathrm{sa}}_i]$ is uncorrelated with the deterministic bias terms:
\[
\E_\sigma[\|\bY_i - \bY^*_i\|^2] = \|g^{\mathrm{sa}}_i \odot b^{\mathrm{sa}}_i + g^{\mathrm{swa}}_i \odot b^{\mathrm{swa}}_i\|^2 + \|g^{\mathrm{sa}}_i\|^2 \cdot v^{\mathrm{sa}}_i,
\]
where $v^{\mathrm{sa}}_i = \E_\sigma[\|\bY^{\mathrm{sa}}_i - \E[\bY^{\mathrm{sa}}_i]\|^2]$. The variance term follows from $\E[\|a \odot X\|^2] = \sum_k a_k^2 \E[X_k^2]$, which holds exactly since the components of $X = \bY^{\mathrm{sa}}_i - \E[\bY^{\mathrm{sa}}_i]$ are uncorrelated with the deterministic vector $a = g^{\mathrm{sa}}_i$. If the per-component variances $\E[X_k^2]$ are approximately uniform across dimensions ($\E[X_k^2] \approx v^{\mathrm{sa}}_i / d$), this simplifies to $\|g^{\mathrm{sa}}_i\|^2 \cdot v^{\mathrm{sa}}_i / d$. The expression in Eq.~\ref{eq:fusion_bv} uses this approximation, absorbing the $1/d$ factor into the definition of $v^{\mathrm{sa}}_i$.
\end{proof}

\section{Model Architecture Details}
\label{sec:appendix_arch}

All pre-training models follow a decoder-only Transformer++ architecture (RMSNorm, SwiGLU, RoPE, no bias), with the components described below. The four model variants (Full Attention, SWA, SA, SA+SWA) differ only in the attention mechanism. All other components are identical.

\textbf{Embedding.}
We use a learned token embedding of dimension $d = 1024$ with vocabulary size 32,000 (Mistral tokenizer). The output LM head shares weights with the input embedding (tied embeddings).

\textbf{Transformer layers.}
The model consists of 24 identical layers, each containing:
\begin{itemize}[leftmargin=1.5em,itemsep=2pt]
  \item \textbf{Pre-norm.} RMSNorm is applied before both the attention and MLP sub-layers.
  \item \textbf{Attention.} Multi-head attention with 16 heads ($d_h = 64$). The Q, K, V projections are fused into a single linear layer ($d \to 3d$, no bias), followed by an output projection ($d \to d$, no bias). RoPE \citep{Su2023} is applied to Q and K using the tokens' original sequence positions (prior to any shuffling). An attention gate ($d \to d$, sigmoid) modulates the attention output before projection: $\mathrm{gate}(\bY_{\mathrm{attn}}) \odot \bY_{\mathrm{attn}}$.
  \item \textbf{MLP.} SwiGLU activation with hidden dimension $\lfloor 2.67 \times d \rfloor = 2{,}734$, implemented as three linear layers: gate ($d \to d_{\mathrm{ff}}$), up ($d \to d_{\mathrm{ff}}$), and down ($d_{\mathrm{ff}} \to d$), all without bias.
\end{itemize}

\textbf{Attention variants.}
\begin{itemize}[leftmargin=1.5em,itemsep=2pt]
  \item \textbf{Full Attention} (360M params): standard causal attention with window size $w = L$ (full sequence).
  \item \textbf{SWA} (360M params): causal sliding-window attention with $w = 256$.
  \item \textbf{SA} (360M params): causal sliding-window attention in shuffled space with $w = 256$. A fresh random permutation is sampled independently for each layer (shared across heads) at each training step.
  \item \textbf{SA+SWA} (385M params): dual-path architecture where the single attention gate is replaced by two independent gates ($\mathrm{gate}_{\mathrm{local}}$, $\mathrm{gate}_{\mathrm{global}}$), each $d \to d$ with sigmoid, producing the fused output $\mathrm{gate}_{\mathrm{local}}(\bY^{\mathrm{swa}}) \odot \bY^{\mathrm{swa}} + \mathrm{gate}_{\mathrm{global}}(\bY^{\mathrm{sa}}) \odot \bY^{\mathrm{sa}}$. This adds ${\sim}$25M parameters (${\sim}$1.05M $\times$ 24 layers) compared to the single-path variants.
\end{itemize}

\textbf{Stochastic attention mask.}
In the SA and SA+SWA variants, the attention mask at each layer is constructed as the intersection of two constraints: (1) causal in original space: $\mathrm{pos}(q) \geq \mathrm{pos}(k)$, where $\mathrm{pos}(\cdot)$ denotes the original sequence position, and (2) window in shuffled space: $|\sigma(i) - \sigma(j)|_n < w/2$, where $\sigma$ is the layer-specific random permutation. This mask is implemented efficiently via FlexAttention \citep{dong2024}.

\textbf{Pseudocode.}

\begin{algorithm}[htbp]
\caption{Stochastic Attention (Single Head)}
\label{alg:stochastic_attn}
\KwIn{$\bQ, \bK, \bV \in \R^{n \times d_h}$; window size $w$}
\KwOut{$\bY^{\mathrm{sto}} \in \R^{n \times d_h}$}
Sample $\sigma \sim \mathrm{Uniform}(\cS_n)$\;
$\tilde{\bQ} \leftarrow \bP_\sigma \bQ$; \quad $\tilde{\bK} \leftarrow \bP_\sigma \bK$; \quad $\tilde{\bV} \leftarrow \bP_\sigma \bV$\;
$\tilde{\bY} \leftarrow \mathrm{SWA}(\tilde{\bQ}, \tilde{\bK}, \tilde{\bV}; w)$\;
$\bY^{\mathrm{sto}} \leftarrow \bP_{\sigma^{-1}} \tilde{\bY}$\;
\Return $\bY^{\mathrm{sto}}$\;
\end{algorithm}

\begin{algorithm}[htbp]
\caption{Gated SA + SWA}
\label{alg:fusion}
\KwIn{$\bX \in \R^{n \times d}$; window size $w$; gate parameters $W_g^{\mathrm{swa}}, W_g^{\mathrm{sa}} \in \R^{d \times d}$}
\KwOut{$\bY \in \R^{n \times d}$}
Compute $\bQ, \bK, \bV$ from $\bX$\;
\tcp{Stochastic Attention}
$\bY^{\mathrm{sa}} \leftarrow \textsc{StochasticAttn}(\bQ, \bK, \bV; w)$ \tcp*{\Cref{alg:stochastic_attn}}
\tcp{Sliding-Window Attention}
$\bY^{\mathrm{swa}} \leftarrow \mathrm{SWA}(\bQ, \bK, \bV; w)$\;
\tcp{Gated Combination}
$g^{\mathrm{swa}} \leftarrow \sigmoid(W_g^{\mathrm{swa}}\, (\bY^{\mathrm{swa}})^\top)^\top$; \quad $g^{\mathrm{sa}} \leftarrow \sigmoid(W_g^{\mathrm{sa}}\, (\bY^{\mathrm{sa}})^\top)^\top$\;
$\bY \leftarrow g^{\mathrm{sa}} \odot \bY^{\mathrm{sa}} + g^{\mathrm{swa}} \odot \bY^{\mathrm{swa}}$\;
\Return $\bY$\;
\end{algorithm}

\newpage
\section{Pre-training Setup Details}
\label{sec:appendix_training}

The models follow a decoder-only Transformer layout with 24 layers, hidden dimension $d = 1024$, 16 attention heads ($d_h = 64$), SwiGLU feed-forward networks (expansion ratio 2.67), RMSNorm, and RoPE \citep{Su2023}.
All models use window size $w = 256$. In SA layers, a random permutation is applied before windowed attention and inverted afterward, with RoPE using the original (pre-permutation) position indices.

We tokenize with the Mistral tokenizer \citep{Jiang2023} (vocabulary size 32{,}000) and train with AdamW ($\beta_1 {=} 0.9$, $\beta_2 {=} 0.95$) at peak learning rate $3 {\times} 10^{-4}$ with cosine decay to $3 {\times} 10^{-5}$ after linear warmup over 0.5B tokens.
Training uses 4$\times$A100 80GB GPUs with per-GPU batch size 16, gradient accumulation over 30 steps, and sequence length 2048, yielding ${\sim}$3.9M tokens per optimizer step in \texttt{bf16} mixed precision.

\section{Per-Task Scaling Results}
\label{sec:appendix_pertask}

\Cref{fig:appendix_8b,fig:appendix_30b} provide per-task accuracy as a function of effective window size for all 7 evaluated benchmarks on Qwen3-8B and Qwen3-30B-A3B, respectively.

\begin{figure*}[htbp]
  \centering
  \includegraphics[width=\textwidth]{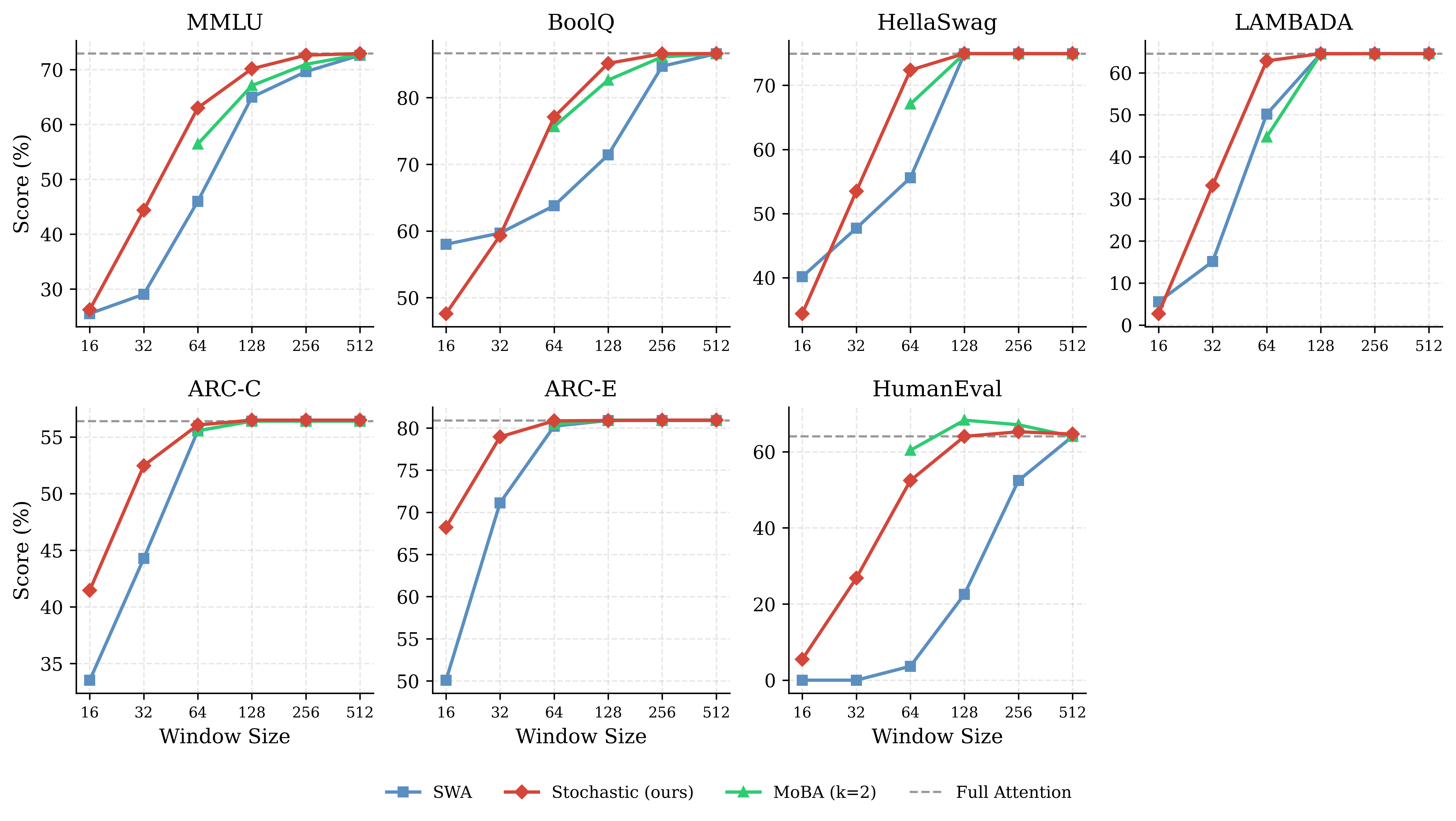}
  \caption{Per-task accuracy vs.\ effective window size for Qwen3-8B across all evaluated benchmarks.}
  \label{fig:appendix_8b}
\end{figure*}

\begin{figure*}[htbp]
  \centering
  \includegraphics[width=\textwidth]{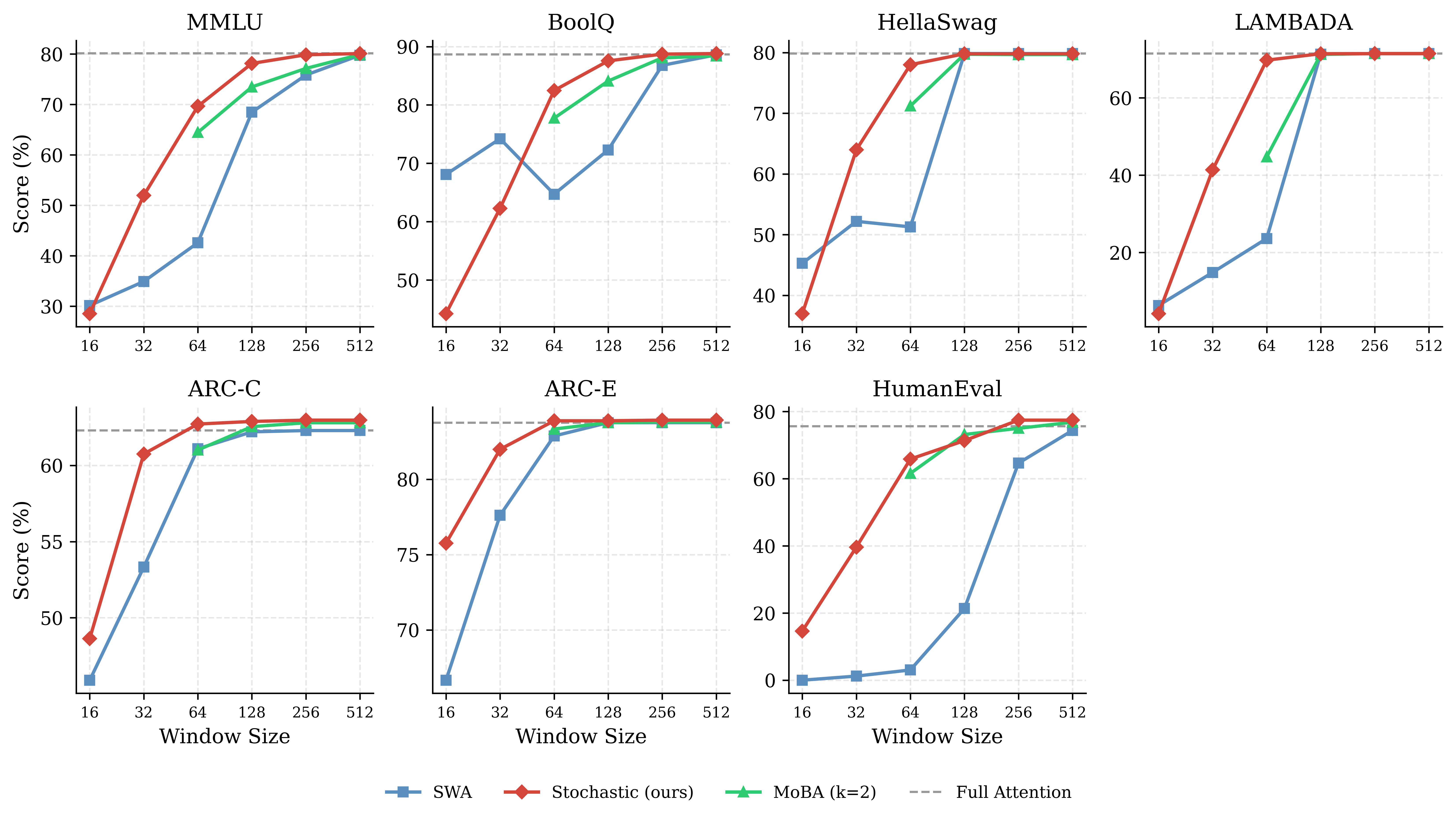}
  \caption{Per-task accuracy vs.\ effective window size for Qwen3-30B-A3B across all evaluated benchmarks.}
  \label{fig:appendix_30b}
\end{figure*}

\section{Detailed Numerical Results}
\label{sec:appendix_tables}

\begin{table}[htbp]
\centering
\resizebox{\columnwidth}{!}{%
\begin{tabular}{ll ccccccc c}
\toprule
\textbf{$w_{\text{eff}}$} & \textbf{Mode} & \textbf{Hella.} & \textbf{MMLU} & \textbf{ARC-C} & \textbf{BoolQ} & \textbf{LMB} & \textbf{ARC-E} & \textbf{HuEval} & \textbf{Avg.} \\
\midrule
$\infty$ & Full & 74.9 & 73.0 & 56.4 & 86.6 & 64.6 & 80.9 & 64.0 & 71.5 \\
\midrule
\multirow{2}{*}{32}
 & SWA        & 47.7 & 29.0 & 44.3 & \textbf{59.7} & 15.1 & 71.1 &  0.0 & 38.1 \\
 & Stochastic & \textbf{53.5} & \textbf{44.4} & \textbf{52.5} & \underline{59.3} & \textbf{33.2} & \textbf{79.0} & \textbf{26.8} & \textbf{49.8} \\
\midrule
\multirow{3}{*}{64}
 & SWA              & 55.6 & 46.0 & \underline{55.6} & 63.8 & 50.2 & 80.2 &  3.7 & 50.7 \\
 & Stochastic       & \textbf{72.4} & \textbf{63.0} & \textbf{56.1} & \textbf{77.1} & \textbf{62.9} & \textbf{80.9} & \underline{52.4} & \textbf{66.4} \\
 & MoBA ($k{=}2$)   & \underline{67.1} & \underline{56.4} & \underline{55.6} & \underline{75.7} & \underline{44.7} & \underline{80.5} & \textbf{60.4} & \underline{62.9} \\
\midrule
\multirow{3}{*}{128}
 & SWA              & \underline{74.9} & 65.0 & 56.4 & 71.4 & \underline{64.5} & \textbf{80.9} & 22.6 & 62.2 \\
 & Stochastic       & \textbf{75.0} & \textbf{70.2} & \textbf{56.5} & \textbf{85.1} & \textbf{64.6} & \textbf{80.9} & \underline{64.0} & \textbf{70.9} \\
 & MoBA ($k{=}2$)   & \underline{74.9} & \underline{67.1} & 56.4 & \underline{82.6} & \underline{64.5} & \textbf{80.9} & \textbf{68.3} & \underline{70.7} \\
\midrule
\multirow{3}{*}{256}
 & SWA              & \underline{74.9} & 69.6 & 56.4 & 84.7 & \textbf{64.6} & \textbf{80.9} & 52.4 & 69.1 \\
 & Stochastic       & \textbf{75.0} & \textbf{72.6} & \textbf{56.5} & \textbf{86.6} & \textbf{64.6} & \textbf{80.9} & \underline{65.2} & \textbf{71.6} \\
 & MoBA ($k{=}2$)   & \underline{74.9} & \underline{71.0} & 56.4 & \underline{86.1} & \textbf{64.6} & \textbf{80.9} & \textbf{67.1} & \underline{71.6} \\
\bottomrule
\end{tabular}
}
\caption{Training-free inference on Qwen3-8B. We report accuracy on 7 benchmarks at selected window sizes. Best result per column among efficient methods in \textbf{bold}, \underline{underlined} denotes second best.}
\label{tab:qwen_main}
\end{table}

\begin{table}[htbp]
\centering
\resizebox{\columnwidth}{!}{%
\begin{tabular}{ll ccccccc c}
\toprule
\textbf{$w_{\text{eff}}$} & \textbf{Mode} & \textbf{Hella.} & \textbf{MMLU} & \textbf{ARC-C} & \textbf{BoolQ} & \textbf{LMB} & \textbf{ARC-E} & \textbf{HuEval} & \textbf{Avg.} \\
\midrule
$\infty$ & Full & 79.8 & 80.1 & 62.3 & 88.7 & 71.5 & 83.8 & 75.6 & 77.4 \\
\midrule
\multirow{2}{*}{32}
 & SWA        & 52.2 & 34.9 & 53.3 & \textbf{74.2} & 14.9 & 77.6 &  1.2 & 44.0 \\
 & Stochastic & \textbf{64.0} & \textbf{52.0} & \textbf{60.8} & 62.3 & \textbf{41.4} & \textbf{82.0} & \textbf{39.6} & \textbf{57.4} \\
\midrule
\multirow{3}{*}{64}
 & SWA              & 51.3 & 42.6 & 61.1 & 64.7 & 23.6 & 82.9 &  3.0 & 47.0 \\
 & Stochastic       & \textbf{78.0} & \textbf{69.6} & \textbf{62.7} & \textbf{82.5} & \textbf{69.7} & \textbf{83.9} & \textbf{65.9} & \textbf{73.2} \\
 & MoBA ($k{=}2$)   & \underline{71.2} & \underline{64.4} & \underline{61.0} & \underline{77.7} & \underline{44.7} & \underline{83.3} & \underline{61.6} & \underline{66.3} \\
\midrule
\multirow{3}{*}{128}
 & SWA              & \underline{79.8} & 68.5 & 62.2 & 72.3 & \underline{71.3} & 83.8 & 21.3 & 65.6 \\
 & Stochastic       & \textbf{79.8} & \textbf{78.1} & \textbf{62.9} & \textbf{87.6} & \textbf{71.4} & \textbf{83.9} & \textbf{71.3} & \textbf{76.4} \\
 & MoBA ($k{=}2$)   & 79.7 & \underline{73.5} & \underline{62.5} & \underline{84.1} & 71.2 & 83.8 & \underline{73.2} & \underline{75.4} \\
\midrule
\multirow{3}{*}{256}
 & SWA              & \underline{79.8} & 75.8 & 62.3 & 86.8 & \textbf{71.5} & 83.8 & 64.6 & 74.9 \\
 & Stochastic       & \textbf{79.8} & \textbf{79.9} & \textbf{63.0} & \textbf{88.7} & 71.4 & \textbf{83.9} & \textbf{77.4} & \textbf{77.7} \\
 & MoBA ($k{=}2$)   & 79.7 & \underline{77.2} & \underline{62.8} & \underline{88.1} & \textbf{71.5} & 83.8 & \underline{75.0} & \underline{76.9} \\
\bottomrule
\end{tabular}
}
\caption{Training-free inference on Qwen3-30B-A3B. Same setup as \Cref{tab:qwen_main}.}
\label{tab:qwen_30b}
\end{table}

\end{document}